\documentclass{article} %
\usepackage{iclr2027_conference,times}

\usepackage{amsmath,amsfonts,bm}

\def\eqref#1{equation~\ref{#1}}

\def\1{\bm{1}}

\DeclareMathAlphabet{\mathsfit}{\encodingdefault}{\sfdefault}{m}{sl}
\SetMathAlphabet{\mathsfit}{bold}{\encodingdefault}{\sfdefault}{bx}{n}

\usepackage{amsthm}
\usepackage[colorlinks=true,citecolor=hzBuilder,linkcolor=hzInk,urlcolor=hzViolet]{hyperref}
\definecolor{hzBuilder}{HTML}{2A78D6}
\definecolor{hzInk}{HTML}{0B0B0B}
\definecolor{hzViolet}{HTML}{4A3AA7}
\usepackage{url}
\usepackage{graphicx}
\usepackage{algorithm,algorithmic}
\usepackage{booktabs,makecell,xcolor,colortbl}
\graphicspath{{figures/}}
\newcommand{\method}{\textup{\textsc{HMA}}}
\newcommand{\methodfull}{\textit{Humanize MLE Agents} (\textbf{\method})}
\newcommand{\crsmdp}{\textup{\textsc{CR-SMDP}}}
\newcommand{\crsmdpfull}{\textit{context-reset semi-Markov decision process} (\textbf{\crsmdp})}
\newcommand{\nta}{\textup{\textsc{NTA}}}
\newcommand{\ntafull}{\textit{Natural-Termination Alternation} (\textbf{\nta})}

\newcommand{\gain}[1]{\textcolor{green!60!black}{\scalebox{0.75}{\tiny{$+$#1}}}}

\newtheorem{theorem}{Theorem}[section]
\newtheorem{corollary}[theorem]{Corollary}
\newtheorem{proposition}[theorem]{Proposition}
\newtheorem{lemma}[theorem]{Lemma}
\theoremstyle{definition}
\newtheorem{assumption}[theorem]{Assumption}
\newtheorem{definition}[theorem]{Definition}
\theoremstyle{plain}
\newtheorem{appendixtheorem}[theorem]{Theorem}
\theoremstyle{definition}

\usepackage{capt-of}

\iclrfinalcopy
\title{Humanize MLE Agents: Multi-Agent Session Alternation Beyond Single-Agent Limits}

\author{\\
\textbf{Changye Li$^{1}$,
Zijian Zhang$^{1,2}$ ,
Muheng Li$^{1}$ ,
Yiduo Pang$^{3}$ ,
Tingjun Wu$^{1}$} \\
\textbf{Menghan Li$^{1}$ ,
Yitong Liu$^{1}$ ,
Hongkai Guo$^{1}$ ,
Mengchi Chen$^{1}$ ,
Chenyu He$^{3}$} \\
\textbf{Kaisheng Ma$^{1}$ ,
Wei Xu$^{1}$ ,
Song Han$^{2,4}$ ,
Sihao Liu$^{2}$ ,
Ligeng Zhu$^{2}$}
\\\\
$^{1}$Tsinghua University ,
$^{2}$NVIDIA ,
$^{3}$ Zhejiang University,
$^{4}$ MIT
}

\begin{document}

\maketitle
\lhead{Preprint}

\begin{abstract}
Existing LLM-based agent approaches for long-horizon machine learning engineering (MLE) often use context engineering, implemented through custom harnesses that manage tools, skills, and accumulated experience.
However, we find that as single-session execution lengthens, the agent's reasoning intensity declines alongside slowing progress, summarized as \emph{long-horizon deceleration}, and the benefits of these custom harnesses vanish as the underlying model improves, summarized as \emph{harness staleness}.
These findings suggest that their benefits may not persist across versions, which motivates a workflow-level approach in which agents build on one another's work across sessions.
We model these alternating agent sessions as a \crsmdpfull{} and establish sufficient conditions for alternation to exceed single-agent reference limits, derive an explicit experiment-budget bound, and characterize when session caps preserve this advantage.
Guided by this analysis, we introduce \methodfull{}, which alternates two agents on a shared workspace, limits each session to at most $k$ experiments and resets the context between sessions, without additional task-specific harness or prior knowledge supplements.
Across 75 Kaggle tasks in MLE-bench, \textbf{HMA} reaches a \emph{state-of-the-art} any-medal rate of 78.2\%, exceeding the strongest open-source baseline by $8.0$ points with a quarter of its time budget.
Moreover, among six configurations spanning five heterogeneous agent pairs, HMA improves any-medal rate over the constituent baselines by $6.4$ points on average, demonstrating cross-pair generality.
A supplementary case on kernel optimization reaches 878 cycles, within 1.6\% of the community leaderboard leader, demonstrating applicability beyond MLE.
\par\noindent\textbf{Code:} \url{https://github.com/humanfia/hma}

\end{abstract}

\section{Introduction}
\label{sec:hma-introduction}

LLM-based agents tackle machine learning engineering (MLE) through repeated experiments: writing training code, running it, and evaluating candidate solutions~\citep{chan2024mlebench,toledo2025airesearch,karpathy2026autoresearch}.
For long-horizon tasks, producing a working solution is only the beginning: the agent must keep finding ways to improve it.
As further gains become harder to find, the agent is expected to spend more tokens reasoning about the limitations of the current solution and deciding the next action.
In our experiment across six models, however, mean output tokens per response—a proxy for reasoning intensity—decline over time as score gains diminish (Figure~\ref{fig:hma-token-growth}).
We refer to this phenomenon as \emph{long-horizon deceleration} and analyze its implications for prior agent design.

Existing approaches often seek to sustain improvement through context engineering, supplying Markdown instructions, skills, and accumulated knowledge to guide an agent's next steps~\citep{zhu2026mlmaster2,du2026mlevolve,karpathy2026autoresearch,rajasekaran2025context}.
However, guidance that helps one model need not help its successor.
For example, OpenAI recommends rewriting \texttt{AGENTS.md}, skills, and task prompts when upgrading models~\citep{openai2026astra}, while Anthropic advises removing instructions for checks that newer models can already perform on their own~\citep{anthropic2026opus5prompting}.
As model capabilities improve, guidance that once compensated for their limitations can thus become redundant or counterproductive.
We observe a related pattern in our comparison of three custom MLE harnesses---systems that manage a model's tools, context, and execution loop~\citep{zhu2026mlmaster2,du2026mlevolve,zhao2026scienceflow}.
Nevertheless, our comparison shows that the relative benefits of specialized harnesses diminish or reverse as the base model improves, even under longer execution budgets (Table~\ref{tab:hma-harness-staleness}).
We refer to this decline in relative performance as \emph{harness staleness}, which indicates the limitations of custom harness design.

These limitations motivate workflow-level coordination across \emph{agent sessions}, each defined as an agent's execution from context initialization to termination.
In MLE, code, trained models, and evaluated candidates persist in a shared workspace, allowing successive agents to build on prior work~\citep{zhao2026scienceflow,young2025harnesses}.
This persistence enables complementary roles: the agent that discovers a useful candidate need not be the one best suited to refine it, and subsequent changes can provide a new starting point when the first agent returns.
We therefore ask: \emph{Can structured alternation convert complementary agent capabilities into a consistent advantage over the best single agent under a matched budget?}

\begin{figure}[t]
 \centering
 \includegraphics[width=\linewidth]{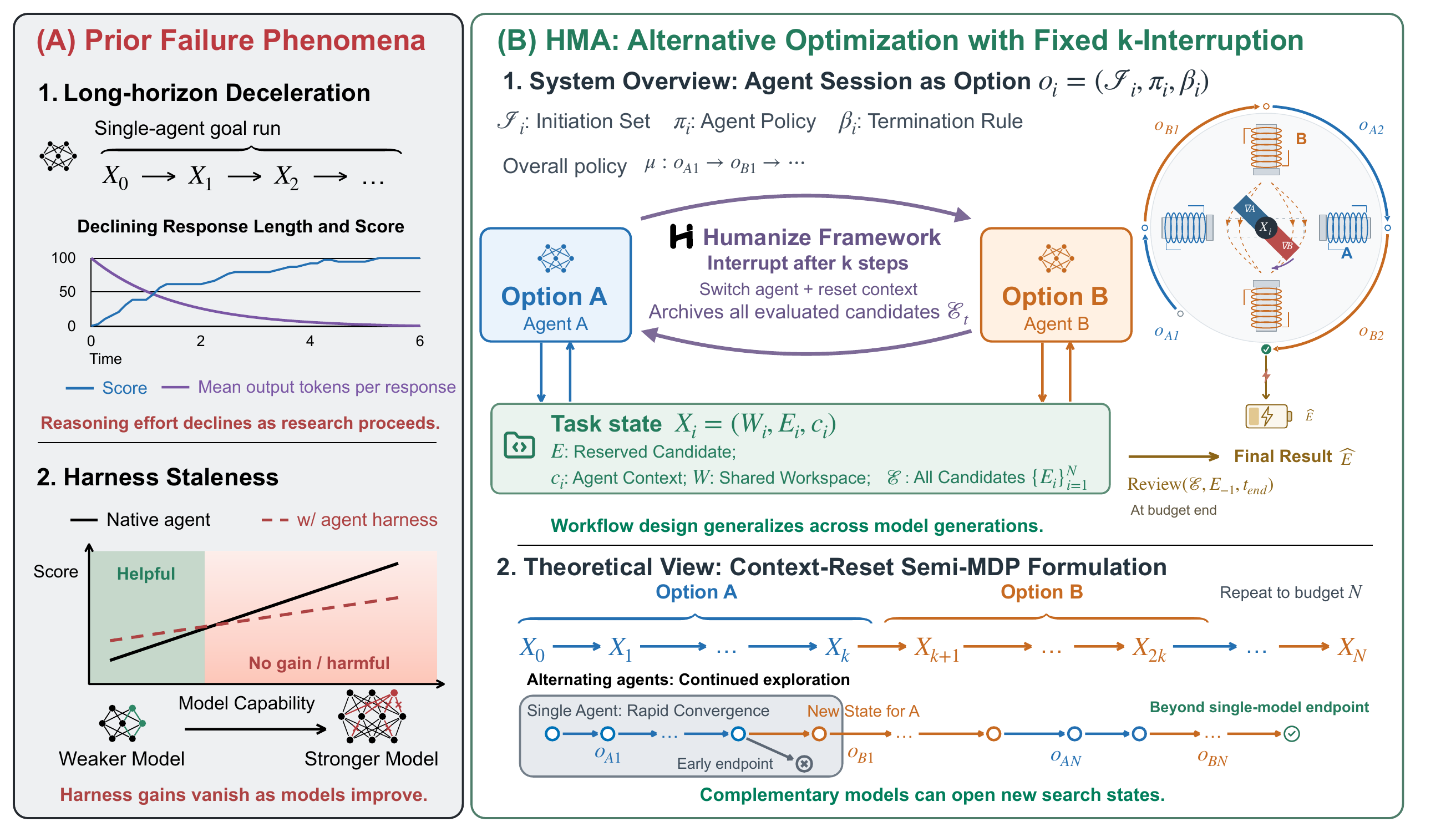}
 \vspace{-8pt}
 \caption{\textbf{\method{} overview.} \textbf{(A)} Declining output tokens per response accompany diminishing score gains, indicating decelerating single-agent exploration; harness gains can vanish or reverse as models improve. \textbf{(B)} \method{} alternates native agents after at most $k$ experiments, resetting context while preserving shared work and candidates for final review. The motor analogy illustrates options $A$ and $B$ alternately energizing shared state $X_i$ toward the final result $\widehat E$. The \crsmdp{} view captures how complementary options can extend exploration beyond single-agent limits.}
 \label{fig:hma-overview}
\end{figure}

To address this question, we represent alternating agent sessions as a \crsmdpfull{}.
We establish sufficient conditions and derive experiment-budget bounds under which alternation exceeds single-agent reference limits, and identify when fixed-$k$ interruption retains this advantage (Sections~\ref{sec:hma-preliminaries}--\ref{sec:hma-fixed-k}).
Building on these results, we introduce \methodfull{}, which coordinates two native agents—complete model—harness pairs using official harnesses—through a shared workspace (Figure~\ref{fig:hma-overview}).
Each session is capped at $k$ experiments; after each handoff, the other agent continues with the shared work and a reset context.
The workflow leaves each agent's tools, context management, and execution loop unchanged~\citep{bolin2026codexloop,anthropicClaudeCodeWorks}, and provides no task-specific knowledge at initialization. Because coordination depends on completed experiments and persistent task artifacts, application to other iterative optimization tasks is a natural question. In summary, our contributions are:
\begin{itemize}
\item \textbf{We identify single-agent limitations and show how \method{} sustains exploration.}
We characterize \emph{long-horizon deceleration} across six models and \emph{harness staleness} across three custom MLE harnesses in Sections~\ref{sec:hma-motivation}.
In contrast, from the early to the late phase of execution, \method{} retains 86.4\% of its mean output tokens per response, compared with 60.7\% for matched single-agent baselines, indicating more persistent exploration across sessions.

\item \textbf{Alternation and workflow design extend single-agent limits.}
Our \crsmdp{} analysis establishes sufficient conditions and experiment-budget bounds for exceeding single-agent reference limits (Theorem~\ref{thm:hma-advantage}), and characterizes when fixed-$k$ interruption preserves this advantage Theorem~\ref{thm:hma-interruption}).
Algorithm~\ref{alg:hma} instantiates these principles with context reset, alternating agents, and final review of candidates (Sections~\ref{sec:hma-preliminaries}--\ref{sec:hma-system}).

\item \textbf{\method{} achieves state-of-the-art any-medal performance with cross-pair generality.}
On MLE-bench, \method{} achieves a \emph{state-of-the-art} any-medal rate of 78.2\%, exceeding the strongest published open-source workflow by 8.0 points.
Across the six heterogeneous agent-pair configurations, \method{} improves the any-medal rate over the constituent baselines by 6.4 points on average, with gains in every configuration (Sections~\ref{sec:hma-main-comparison}).
We also explore applicability beyond MLE-bench through a supplementary 48-hour kernel-optimization case study (Section~\ref{sec:hma-aopt-cap3}).
\end{itemize}

\section{Motivation: Limits of Single-Agent Exploration}
\label{sec:hma-motivation}

\begin{figure}[h]
\centering
\begin{minipage}[t]{.48\linewidth}
\vspace{0pt}
\centering
\includegraphics[width=.90\linewidth]{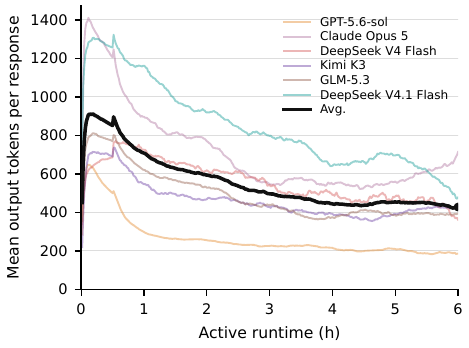}
\vspace{-4pt}
\caption{\textbf{Single-session exploration decelerates over time.}
Each curve reports mean output tokens per recorded model response for one \texttt{/goal} configuration, pooling responses across 75 tasks and repeats. Across configurations, mean response length falls by 29.8\%--45.0\% from hours 0--2 to hours 3--5, indicating slower single-session exploration over time.}
\label{fig:hma-token-growth}
\label{fig:hma-response-token-growth}
\end{minipage}\hfill
\begin{minipage}[t]{.50\linewidth}
\vspace{0pt}
\centering
\makeatletter\def\@captype{table}\makeatother
\caption{\textbf{Specialized harnesses become less effective as the backbone improves.}
We report medal rates on the 16-task MLE-bench subset, compared on DeepSeek-V4-Flash with DeepSeek-V4.1-Flash. The DeepSeek Harness row is the native baseline. Bold and underlined entries indicate the largest and second-largest rates within each model column. With the weaker backbone, some specialized harnesses match or exceed the native baseline, whereas the stronger backbone consistently favors the native harness. This gap persists despite the longer execution budgets of the specialized workflows, showing the empirical pattern we name \emph{harness staleness}.}
\label{tab:hma-harness-staleness}
\footnotesize
\setlength{\tabcolsep}{3pt}
\renewcommand{\arraystretch}{1.12}
\resizebox{\linewidth}{!}{%
\begin{tabular}{@{}l c r r@{}}
\toprule
\textbf{Harness} & \makecell{\textbf{Time}\\\textbf{(h)}}
& \makecell{\textbf{DeepSeek-V4-Flash}\\\textbf{Medals (\%)}}
& \makecell{\textbf{DeepSeek-V4.1-Flash}\\\textbf{Medals (\%)}} \\
\midrule
DeepSeek Harness & 6 & 16.7 & \textbf{77.1} \\
ScienceFlow & 24 & \textbf{25.0}\,{\scriptsize\textcolor{green!60!black}{+8.3}} & \underline{68.8}\,{\scriptsize\textcolor{red!70!black}{$-8.3$}} \\
ML-Master 2.0 & 24 & 6.3\,{\scriptsize\textcolor{red!70!black}{$-10.4$}} & 37.5\,{\scriptsize\textcolor{red!70!black}{$-39.6$}} \\
MLEvolve & 12 & \underline{18.8}\,{\scriptsize\textcolor{green!60!black}{+2.1}} & 50.0\,{\scriptsize\textcolor{red!70!black}{$-27.1$}} \\
\bottomrule
\end{tabular}%
}
\end{minipage}
\end{figure}

In this section, we investigate two limitations of single-agent MLE workflows: diminishing exploration over extended execution and limited transfer of harness improvements across model versions. These observations motivate \method{}, which coordinates alternating agents from different providers to extend exploration beyond the capabilities of an individual model–harness system.

\paragraph{Long-horizon deceleration.}
We examine six single-session configurations under \texttt{/goal} mode. Figure~\ref{fig:hma-token-growth} reports mean output tokens per response in a trailing 30-minute window, used as a proxy for the agent's reasoning intensity. Across configurations, response length decreases by 29.8\%--45.0\% from hours 0--2 to hours 3--5. Together with the rapid saturation of prefix gains in Figure~\ref{fig:hma-prefix-retention}, this reveals diminishing returns from continued single-agent exploration; detailed measurement definitions and supporting statistics are given in Appendix~\ref{app:goal-token-growth}.

\paragraph{Harness staleness.}
To counter diminishing single-agent exploration, existing harnesses introduce planning, review, and context engineering~\citep{young2025harnesses,rajasekaran2025context}. With task-specific cold-start knowledge disabled for fairness, our comparison of recent MLE harnesses~\citep{zhu2026mlmaster2,du2026mlevolve,zhao2026scienceflow} reveals \emph{harness staleness}: their benefits relative to the official harness diminish as the backbone improves (Table~\ref{tab:hma-harness-staleness}). The stronger performance of the official harness is consistent with a possible benefit from model–harness alignment acquired during training. This motivates treating each agent as an integrated model–harness system and studying how agents from different providers collaborate through workflow-level coordination. We present detailed configurations and results in Appendix~\ref{app:harness-staleness-results}.

\section{Method}
\label{sec:hma-method}

In this section, we formulate \methodfull{} as a context-reset semi-MDP (Section~\ref{sec:hma-preliminaries}), derive conditions and experiment budgets for alternation to exceed single-agent reference limits (Section~\ref{sec:hma-main-theory}), analyze fixed-$k$ interruption and empirical cap selection (Section~\ref{sec:hma-fixed-k}), and present the complete \method{} workflow (Section~\ref{sec:hma-system}).

\subsection{\method{} Preliminaries}
\label{sec:hma-preliminaries}

\paragraph{Workflow policy and task state.}
A multi-agent workflow that alternates between agent sessions is governed by a fixed \emph{policy over options} $\mu$~\citep{sutton1999between}. Without loss of generality, we consider two agents, $A$ and $B$, working on a shared task. The state $x=(W,E,c^{(u)})$ comprises the shared workspace $W$, retained result $E$, and context $c^{(u)}$ of the active agent $u\in\{A,B\}$. Session $i$ of the workflow, executed by agent $u$, is modeled as an option $o_i^{(u)}=(\mathcal I_i^{(u)},\pi_i^{(u)},\beta_i^{(u)})$, with initiation set $\mathcal I_i^{(u)}:=\{x:\mu(o_i^{(u)}\mid x)>0\}$, internal policy $\pi_i^{(u)}$, and termination rule $\beta_i^{(u)}$. Termination creates an \emph{option boundary}, where $\mu$ selects the next option.

\paragraph{Task initialization and result quality.}
For a general MLE task, $N$ is the total experiment budget of the workflow; each primitive step completes one experiment and updates $(W,E)$ within an option. The initial state $x_0=(W_0,E_0,\emptyset)\sim\nu_0$ is drawn from the task-state distribution $\nu_0$, with workspace $W_0$, empty result $E_0$, and empty context. Let $E_n$ be the result retained after $n$ experiments, with normalized quality $q(x):=q(E)\in[0,1]$. Expected quality and target-success probability are $\operatorname{Perf}_\mu(N):=\mathbb E_{\mu,\nu_0}q(E_N)$ and $p_\mu(N;\tau):=\Pr\nolimits_{\mu,\nu_0}(q(E_N)\ge\tau)$ for $\tau\in[0,1]$.

\paragraph{Workflow trajectory, rewards, and values.}
The trajectory is $(x_0,o_0^{(u_0)},x_1,o_1^{(u_1)},\ldots)$, where session $i$ starts at $x_i$ and selects $o_i^{(u_i)}\sim\mu(\cdot\mid x_i)$ with $u_i\in\{A,B\}$. The option kernel is $P^{(u)}(x,x'):=\Pr\nolimits(x_{i+1}=x'\mid x_i=x,u_i=u)$. With a discount factor $\gamma=1$ and reward $r_{n+1}:=q(E_{n+1})-q(E_n)$. Let $\bar r^{(u)}(x)$ be the expected option reward from $x$. Values $V^\mu(x)$ and $Q^\mu(x,o_i^{(u)})$ are expected future quality increments, with $Q^\mu$ executing $o_i^{(u)}$ first. The semi-MDP Bellman recursion~\citep{bradtke1994reinforcement,dietterich2000hierarchical} gives
\begin{equation}
 Q^\mu(x,o_i^{(u)})=\bar r^{(u)}(x)+\sum\nolimits_{x'}P^{(u)}(x,x')V^\mu(x'),
 \quad V^\mu(x)=\sum\nolimits_u\mu(o_i^{(u)}\mid x)Q^\mu(x,o_i^{(u)}).
 \label{eq:hma-option-bellman}
\end{equation}

\paragraph{Context-Reset SMDP.}
The \method{} workflow is formulated as a \emph{context-reset SMDP} (\crsmdp{}), whose option boundaries preserve $(W,E)$ and reset context to $\emptyset$. For integer cap $k\ge1$, remaining experiment budget $t$, and natural duration $T_i^{(u)}(x_i)$ measured in experiments, option $i$ completes $d_i^{(u)}=\min\{T_i^{(u)}(x_i),k,t\}$ experiments before natural termination, fixed-$k$ interruption, or budget exhaustion. If $T_i^{(u)}(x_i)>k$ almost surely and $t\ge k$, then $\beta_i^{(u)}(\ell)=\mathbf1_{\{\ell\ge k\}}$ and $d_i=k$, where $\ell$ counts completed experiments within the option. Further formulation details and early natural termination analysis appear in Appendix~\ref{app:formulation}.

\subsection{Alternation Beyond Single-Agent Limits}
\label{sec:hma-main-theory}

For $u\in\{A,B\}$, let $\mathfrak B^{(u)}$ be a finite set of single-agent workflow variants of $u$ with ordinary or specified stronger harnesses. Define the reference workflow family $\mathfrak B_{A,B}:=\mathfrak B^{(A)}\cup\mathfrak B^{(B)}$. All workflows initialize from $x_0\sim\nu_0$ and satisfy the following retention assumption.
\begin{assumption}[Retention of the best result]
\label{ass:hma-retention}
$q(E_{n+1})\ge q(E_n)$ for all $n$.
\end{assumption}
We adopt the fixed-$k$ termination rule from Section~\ref{sec:hma-preliminaries}. For simplicity, assume the alternating policy $\mu_k=(A,B,u_0,k)$ starts with $u_0\in\operatorname*{arg\,max}_{u\in\{A,B\}}\operatorname{Perf}_{(A,B,u,k)}(N)$, where $\bar A=B$ and $\bar B=A$. With option kernels $P_k^{(u)}$, including context reset, define the cycle kernel and reference ceiling:
\begin{equation}
 M_{\mu_k}=P_k^{(u_0)}P_k^{(\bar u_0)},
 \qquad U_*:=\max_{b\in\mathfrak B_{A,B}}\lim_{N\to\infty}\operatorname{Perf}_b(N).
 \label{eq:hma-joint-reference}
\end{equation}
On each task, let $\mathcal X_{\mathrm{ref}}$ and $\mathcal X_{\mu_k}^{\mathrm{alt}}$ contain states whose shared components $(W,E)$ are reachable by the references and alternation, respectively. For $\tau>U_*$, the target set $\mathcal T:=\{x\in\mathcal X_{\mu_k}^{\mathrm{alt}}\setminus\mathcal X_{\mathrm{ref}}:q(x)\ge\tau\}$ contains high-quality states beyond reference reachability; the next assumption requires persistent cooperative access to them.

\begin{assumption}[Cooperative access and use]
\label{ass:hma-access}
Fix $\mu_k$ and $\tau\in(U_*,1]$. In cycle $m$, agents $u$ and $\bar u$ start at $x_{2m}$ and $x_{2m+1}$, respectively. There exist constants $\alpha,\beta\in(0,1]$ such that, for every $m$ with $\Pr\nolimits_{\mu_k}(q(x_{2m})<\tau)>0$,
{\small
\begin{equation}
 \Pr\nolimits_{\mu_k}(x_{2m+1}\in G\mid q(x_{2m})<\tau)\ge\alpha,
 \quad G:=\{x:q(x)<\tau,\ \sum\nolimits_{x'\in\mathcal T}P_k^{(\bar u)}(x,x')\ge\beta\}.
 \label{eq:hma-expansion-conditions}
\end{equation}
}
Here $\alpha$ bounds $u$'s access to states useful to $\bar u$, and $\beta$ bounds $\bar u$'s probability of reaching $\mathcal T$.
\end{assumption}

\begin{theorem}[Joint alternation exceeds the reference limits]
\label{thm:hma-advantage}
Under Assumptions~\ref{ass:hma-retention} and~\ref{ass:hma-access}, let $a=\alpha\beta$ and $F_0=\Pr\nolimits_{\mu_k}(q(x_0)<\tau)$. For finite $k$ and integers $m\ge0$,
\begin{equation}
 \operatorname{Perf}_{\mu_k}(2km)\ge\tau[1-F_0(1-a)^m],
 \qquad U_{\mu_k}:=\lim_{N\to\infty}\operatorname{Perf}_{\mu_k}(N)\ge\tau>U_*.
 \label{eq:hma-main-guarantee}
\end{equation}
\end{theorem}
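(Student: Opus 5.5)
The plan is to track the failure probability $F_m:=\Pr_{\mu_k}(q(x_{2m})<\tau)$ at cycle boundaries and show it contracts geometrically, $F_{m+1}\le(1-a)F_m$. Then $F_m\le F_0(1-a)^m$. Under the fixed-$k$ rule each option uses exactly $k$ experiments, so one cycle uses $2k$ of them and $x_{2m}$ carries the retained result $E_{2km}$. Quality lies in $[0,1]$, so Markov's inequality from below gives $q\ge\tau\mathbf 1_{\{q\ge\tau\}}$. Hence
\[
\operatorname{Perf}_{\mu_k}(2km)=\mathbb E\,q(E_{2km})\ge\tau\,\Pr(q(x_{2m})\ge\tau)=\tau(1-F_m)\ge\tau[1-F_0(1-a)^m].
\]
This is the first claim. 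If the budget ever truncates a session, I would note that the bound is only asserted at the multiples $N=2km$, where no truncation occurs.

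For the contraction, fix $m$ with $F_m>0$ (if $F_m=0$, retention gives $F_{m'}=0$ for all $m'\ge m$). I condition on $\{q(x_{2m})<\tau\}$. By Assumption~\ref{ass:hma-access}, with probability at least $\alpha$ the state $x_{2m+1}$ at which $\bar u$ starts lies in $G$. On that event, the Markov property of the option kernel $P_k^{(\bar u)}$ puts $x_{2m+2}$ in $\mathcal T$ with probability at least $\beta$. Every state in $\mathcal T$ has $q\ge\tau$. By the tower property,
\[
\Pr(q(x_{2m+2})\ge\tau\mid q(x_{2m})<\tau)\ge\alpha\beta=a.
\]
On the complementary event $\{q(x_{2m})\ge\tau\}$, Assumption~\ref{ass:hma-retention} applied across the $2k$ intermediate experiments gives $q(x_{2m+2})\ge q(x_{2m})\ge\tau$. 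Combining the two events yields $F_{m+1}\le F_m(1-a)$. Induction on $m$ completes the argument.

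For the limit, $\operatorname{Perf}_{\mu_k}(N)$ is nondecreasing in $N$ by retention, so the limit $U_{\mu_k}$ exists and is at most $1$. Along the subsequence $N=2km$ it is at least $\tau[1-F_0(1-a)^m]\to\tau$, because $a>0$. Therefore $U_{\mu_k}\ge\tau>U_*$, using the choice $\tau\in(U_*,1]$. The $U_*$ side also needs care: each reference $b\in\mathfrak B_{A,B}$ has a nondecreasing $\operatorname{Perf}_b(N)\le1$, so each limit exists, and the finite maximum is well defined.

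The main obstacle is the Markov/conditioning step. Assumption~\ref{ass:hma-access} bounds the kernel $P_k^{(\bar u)}$ from a fixed state $x\in G$, but the process is a policy over options with context reset. I would argue that at each boundary the state $x=(W,E,\emptyset)$ is a sufficient statistic, since the context is reset and the alternation order is deterministic given $m$. Then $\Pr(x_{2m+2}\in\mathcal T\mid x_{2m+1}=x,\mathcal F_{2m+1})=\sum_{x'\in\mathcal T}P_k^{(\bar u)}(x,x')\ge\beta$ for $x\in G$. One subtlety is that the conditioning event in the assumption is the coarse event $\{q(x_{2m})<\tau\}$ rather than the full history. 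This is harmless, because I only integrate the pointwise lower bound over $G$ and then use the assumption's lower bound on $\Pr(x_{2m+1}\in G)$. I would write that step as an explicit sum over $x\in G$ to make the integration visible.
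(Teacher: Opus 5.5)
Your proposal is correct and follows the paper's proof essentially step for step. The paper proves $F_{m+1}\le(1-a)F_m$ by the same Markov-property integration over $G$, namely $\Pr(x_{2m+2}\in S_\tau\mid x_{2m}\notin S_\tau)\ge\beta\Pr(x_{2m+1}\in G\mid x_{2m}\notin S_\tau)\ge\alpha\beta$, with retention making $S_\tau$ absorbing; it then chains $\operatorname{Perf}_{\mu_k}(2km)\ge\tau p_{\mu_k}(2km;\tau)\ge\tau(1-F_m)$ and takes the bounded monotone limit, and the only difference is that it phrases the budget step as $C_{\mu_k,m}\le 2km$ plus retention rather than assuming exactly $k$ experiments per option, which also covers early natural termination, whereas under the fixed-$k$ rule adopted in the theorem your version is equivalent.
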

\begin{proof}[Proof sketch]
With $F_m=\Pr\nolimits_{\mu_k}(q(x_{2m})<\tau)$, persistent access and quality preservation give $F_{m+1}\le(1-a)F_m$. Iteration and $q\ge0$ imply the bound; monotone convergence gives the limit.
\end{proof}

\begin{corollary}[Experiment budget]
\label{cor:hma-budget}
Under Theorem~\ref{thm:hma-advantage}, with $F_0=1$, fixed $p_0\in(0,1)$, and $0<a\ll1$, a sufficient budget for $p_{\mu_k}(N;\tau)\ge p_0$ has the leading form
\begin{equation}
 N_{\mathrm{suff}}(k,p_0,\tau)
 =\frac{2k}{a}\log\frac{1}{1-p_0}\,\bigl[1+O(a)\bigr].
 \label{eq:hma-target-budget}
\end{equation}
\end{corollary}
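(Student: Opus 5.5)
We work with the tail quantity $F_m:=\Pr_{\mu_k}(q(x_{2m})<\tau)$ from the proof sketch of Theorem~\ref{thm:hma-advantage}. The aim is a bound on the target-success probability $p_{\mu_k}(2km;\tau)$ itself, rather than on $\operatorname{Perf}_{\mu_k}$, which is what the corollary needs. The first step is to reuse the recursion $F_{m+1}\le(1-a)F_m$ established in that proof. Under Assumption~\ref{ass:hma-retention}, once $q(x_{2m})\ge\tau$ the quality stays above $\tau$. Under Assumption~\ref{ass:hma-access}, a state with $q(x_{2m})<\tau$ reaches $G$ after the first session with probability at least $\alpha$, and then enters $\mathcal T$ after the second session with probability at least $\beta$. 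Iterating with $F_0=1$ gives
\[
p_{\mu_k}(2km;\tau)=1-F_m\ge 1-(1-a)^m .
\]
This is exactly the success-probability form of the bound \eqref{eq:hma-main-guarantee}, since that bound was obtained from $\operatorname{Perf}\ge\tau(1-F_m)$. Monotonicity in $N$, again from Assumption~\ref{ass:hma-retention}, lets us pass from multiples of $2k$ to general $N$. Specifically, $p_{\mu_k}(N;\tau)\ge 1-(1-a)^{\lfloor N/(2k)\rfloor}$.

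Next we solve for the number of cycles. The condition $(1-a)^m\le 1-p_0$ holds whenever
\[
m\ge m^*:=\frac{\log\frac{1}{1-p_0}}{-\log(1-a)} .
\]
A sufficient budget is therefore $N=2k\lceil m^*\rceil$. For the asymptotics, expand $-\log(1-a)=a+a^2/2+O(a^3)=a\,[1+O(a)]$, which gives $1/(-\log(1-a))=\frac1a[1+O(a)]$. It follows that
\[
2k\,m^*=\frac{2k}{a}\log\frac{1}{1-p_0}\,[1+O(a)] .
\]

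The only delicate point is the ceiling, which adds at most $2k$ experiments. Relative to the leading term this extra cost is
\[
\frac{2k}{\tfrac{2k}{a}\log\frac1{1-p_0}}=\frac{a}{\log\frac1{1-p_0}} .
\]
This is $O(a)$ because $p_0$ is fixed, so $\log\frac1{1-p_0}$ is a positive constant; the $O(a)$ constant depends on $p_0$. Hence the ceiling is absorbed into the $[1+O(a)]$ factor, and we obtain $N_{\mathrm{suff}}$ as in \eqref{eq:hma-target-budget}.

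The main obstacle is justifying that the recursion holds conditionally at every cycle, and not just at the first. Assumption~\ref{ass:hma-access} supplies exactly this uniformity in $m$, because it holds for every $m$ with $F_m>0$. If $F_m=0$ for some $m$, the bound is trivial.
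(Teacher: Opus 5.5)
Your proposal is correct and follows the paper's proof essentially step for step: iterate $F_{m+1}\le(1-a)F_m$ with $F_0=1$ to get $p_{\mu_k}(2km;\tau)\ge 1-(1-a)^m$, take the rounded budget $2k\lceil \log(1/(1-p_0))/(-\log(1-a))\rceil$, and absorb both the expansion $-\log(1-a)=a+O(a^2)$ and the rounding error below $2k$ into the $[1+O(a)]$ factor for fixed $p_0$. One minor correction: write $p_{\mu_k}(2km;\tau)\ge 1-F_m$ rather than an equality, as the paper does, because $x_{2m}$ is reached after $C_{\mu_k,m}\le 2km$ experiments and retention only gives $q(E_{2km})\ge q(x_{2m})$; this does not affect the argument.
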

Taking $p_0=(U_*+\eta)/\tau$, $0<\eta<\tau-U_*$ guarantees mean quality $U_*+\eta$. The budget form also applies to a single agent with a uniform conditional success probability of at least $a>0$. Appendix~\ref{app:advantage} gives the proofs, exact budgets, and startup analysis.

\subsection{Fixed-k Interruption and Empirical Choice}
\label{sec:hma-fixed-k}

In this section, we study when fixed-$k$ interruption maintains the advantage in Theorem~\ref{thm:hma-advantage} and motivate short caps empirically. We use \ntafull{}, $\mu_\infty:=(A,B,u_0,\infty)$, as the reference without fixed-$k$ interruption, with $d_i^{(u)}=\min\{T_i^{(u)}(x_i),t\}$.

Let $x_{i,\ell}$ be the state after $\ell$ experiments within option $i$, with entry $x_{i,0}=x_i$ and final state $x_{i,-1}:=x_{i,d_i^{(u_i)}}$ before context reset. At a boundary, $x_{i+1,0}=f(x_{i,-1})$, where $f(W,E,c):=(W,E,\emptyset)$. We define $A^{\mu_\infty}(x_{i,-1}):=V^{\mu_\infty}(x_{i+1,0})-Q^{\mu_\infty}(x_{i,-1},o_i^{(u_i)})$. Here $Q^{\mu_\infty}$ continues the current option, whereas $V^{\mu_\infty}$ starts the reset option; the interruption theorem~\citep[Theorem~2]{sutton1999between} gives nondegradation when $A^{\mu_\infty}(x_{i,-1})\ge0$.

For the capped policy $\mu_k$, define the interrupted option indices $\mathcal J_{\mu_k}(N):=\{i\ge0:\sum_{j=0}^{i}d_j^{(u_j)}<N,\allowbreak\ d_i^{(u_i)}=k<T_i^{(u_i)}(x_{i,0})\}$.  The expected net gain and \nta{} margin are $\mathcal A_{\mu_k}(N):=\mathbb E_{\mu_k}\sum_{i\in\mathcal J_{\mu_k}(N)}A^{\mu_\infty}(x_{i,-1})$ and $\Delta_{\mu_\infty}(N):=\operatorname{Perf}_{\mu_\infty}(N)-U_*$.

\begin{theorem}[Expected advantage under fixed-k interruption]
\label{thm:hma-interruption}
For the optimal initial agent $u_0$, a sufficient condition to exceed $U_*$ is
\begin{equation}
 \Delta_{\mu_\infty}(N)+\mathcal A_{\mu_k}(N)>0
 \quad\Longrightarrow\quad
 \operatorname{Perf}_{\mu_k}(N)>U_*.
 \label{eq:hma-core-preservation}
\end{equation}
\end{theorem}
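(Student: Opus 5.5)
The natural route is a telescoping comparison in the style of the interruption theorem of \citet[Theorem~2]{sutton1999between}. Since $\Delta_{\mu_\infty}(N)=\operatorname{Perf}_{\mu_\infty}(N)-U_*$, the implication in \eqref{eq:hma-core-preservation} follows at once from the identity
\begin{equation*}
 \operatorname{Perf}_{\mu_k}(N)-\operatorname{Perf}_{\mu_\infty}(N)=\mathcal A_{\mu_k}(N).
\end{equation*}
It even suffices to prove the inequality $\ge$. The plan is to establish this identity by comparing $\mu_k$ and $\mu_\infty$ along $\mu_k$'s trajectory, one interruption at a time.

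\emph{Step 1: value form.} With $\gamma=1$ and $r_{n+1}=q(E_{n+1})-q(E_n)$, the rewards telescope. Hence $\operatorname{Perf}_\mu(N)=\mathbb E_{\nu_0}q(E_0)+\mathbb E_{\nu_0}V^\mu_N(x_0)$, where $V^\mu_N$ is the budget-$N$ value of the Bellman recursion \eqref{eq:hma-option-bellman}. Both policies share $\nu_0$ and $u_0$, so the term $q(E_0)$ cancels. To make the budget-dependent values Markov, I will augment the state with the remaining budget $t$ and the within-option counter $\ell$. This is only bookkeeping.

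\emph{Step 2: performance-difference telescoping.} Let $\mu^{(j)}$ denote the hybrid policy that follows $\mu_k$ through its first $j$ interruptions and then follows $\mu_\infty$. Then $\mu^{(0)}=\mu_\infty$ and $\mu^{(j)}\to\mu_k$; the sequence stabilizes after at most $\lceil N/k\rceil$ interruptions, since the budget is finite. The two consecutive hybrids $\mu^{(j)}$ and $\mu^{(j+1)}$ agree up to the $j$-th interruption time. At that point, $\mu^{(j)}$ continues the current option, with value $Q^{\mu_\infty}(x_{i,-1},o_i^{(u_i)})$, while $\mu^{(j+1)}$ applies $f$ and starts the reset option of the alternate agent, with value $V^{\mu_\infty}(x_{i+1,0})$. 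Conditioning on the history up to that time, the strong Markov property (in the augmented state) gives
\begin{equation*}
 \operatorname{Perf}_{\mu^{(j+1)}}(N)-\operatorname{Perf}_{\mu^{(j)}}(N)=\mathbb E_{\mu_k}\bigl[A^{\mu_\infty}(x_{i_j,-1})\,\mathbf 1\{\text{$j$-th interruption occurs before budget } N\}\bigr].
\end{equation*}
The law of the prefix up to the $j$-th interruption is the same under $\mu_k$ and $\mu^{(j+1)}$. Summing over $j$, and exchanging sum and expectation (valid because the sum has finitely many terms and $q\in[0,1]$), yields exactly $\mathcal A_{\mu_k}(N)$, with the index set $\mathcal J_{\mu_k}(N)$.

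\emph{Main obstacle.} The delicate point is Step~2's claim that after each switch the continuation value is exactly $V^{\mu_\infty}$ evaluated under $\mu_\infty$. Interrupting changes which agent is active next, so the alternation parity of all later options differs between $\mu^{(j)}$ and $\mu^{(j+1)}$. I would handle this by letting $\mu_\infty$'s next-agent rule depend on the identity of the last active agent, which is carried in the augmented state. Then $V^{\mu_\infty}(x_{i+1,0})$ is well defined for the post-reset state, and the definition of $A^{\mu_\infty}$ in the paper is read in this sense.

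A second subtlety is that the remaining budget at the interruption point must be threaded through $Q$ and $V$ consistently. The definition of $\mathcal J_{\mu_k}(N)$, with its conditions $\sum_{j\le i} d_j<N$ and $d_i=k<T_i$, is precisely what ensures that only genuine interruptions occurring before budget exhaustion contribute.

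Once the identity holds, substituting $\operatorname{Perf}_{\mu_\infty}(N)=U_*+\Delta_{\mu_\infty}(N)$ gives $\operatorname{Perf}_{\mu_k}(N)-U_*=\Delta_{\mu_\infty}(N)+\mathcal A_{\mu_k}(N)>0$, which is the statement of Theorem~\ref{thm:hma-interruption}.
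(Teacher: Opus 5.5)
Your proposal is correct and is essentially the paper's proof: the paper also reduces the claim to the identity $\operatorname{Perf}_{\mu_k}(N)-\operatorname{Perf}_{\mu_\infty}(N)=\mathcal A_{\mu_k}(N)$, proves it by telescoping $\mu_\infty$'s option-level Bellman recursion along $\mu_k$'s trajectory as in Sutton et al.'s interruption theorem, and then substitutes $\operatorname{Perf}_{\mu_\infty}(N)=U_*+\Delta_{\mu_\infty}(N)$. The only difference is bookkeeping: the paper telescopes option by option, writing each increment as $\mathbb E_{\mu_k}[\iota_i A^{\mu_\infty}(x_{i,-1})\mid x_{i,0}]$ so that uninterrupted options contribute zero, which avoids defining your history-dependent hybrids $\mu^{(j)}$ (with your definition, $\mu^{(j)}$ and $\mu^{(j+1)}$ first differ at the $(j+1)$-th interruption, not the $j$-th, a harmless indexing slip).
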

\begin{proof}[Proof sketch]
Telescoping the option-level Bellman recursion gives $\operatorname{Perf}_{\mu_k}(N)-\operatorname{Perf}_{\mu_\infty}(N)=\mathcal A_{\mu_k}(N)$. Substitution proves the claim; see Appendix~\ref{app:interruption}.
\end{proof}

\begin{corollary}[Upper bound on the required cap]
\label{cor:hma-sufficient-cap}
Suppose $\Delta_{\mu_\infty}(N)>0$ and interruption losses decay geometrically with $k$, then some finite $1\le k\le k_0$ satisfies $\operatorname{Perf}_{\mu_k}(N)>U_*$.
\end{corollary}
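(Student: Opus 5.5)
The plan is to reduce the corollary to Theorem~\ref{thm:hma-interruption}. That theorem says $\operatorname{Perf}_{\mu_k}(N)>U_*$ as soon as $\Delta_{\mu_\infty}(N)+\mathcal A_{\mu_k}(N)>0$. The margin $\Delta_{\mu_\infty}(N)$ does not depend on $k$ and is positive by hypothesis. So it suffices to find a finite $k$ at which the net interruption gain $\mathcal A_{\mu_k}(N)$ is either nonnegative or strictly smaller in magnitude than $\Delta_{\mu_\infty}(N)$.

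First I will make the phrase ``interruption losses decay geometrically with $k$'' precise. Write the negative part as $\mathcal A_{\mu_k}(N)^-:=\max\{0,-\mathcal A_{\mu_k}(N)\}$. The assumption I will use is that there are constants $C\ge0$ and $\rho\in(0,1)$, possibly depending on $N$ and the task but not on $k$, with
\[
\mathcal A_{\mu_k}(N)^-\le C\rho^{k}\qquad\text{for all }k\ge1 .
\]
This is the reading that fits the definition of $\mathcal A_{\mu_k}(N)$ as a sum of advantages $A^{\mu_\infty}(x_{i,-1})$ over interrupted options. Interruption at $k$ matters only on the event $T_i^{(u_i)}(x_{i,0})>k$. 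If the natural durations have geometric tails, or the per-interruption losses shrink geometrically once the option has run $k$ experiments, then bounding each summand and using $|\mathcal J_{\mu_k}(N)|\le N/k$ gives a bound of this form. I will state this as the interpretation of the hypothesis rather than derive it.

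With the bound in hand, define
\[
k_0:=\max\Bigl\{1,\Bigl\lceil \log\bigl(C/\Delta_{\mu_\infty}(N)\bigr)/\log(1/\rho)\Bigr\rceil+1\Bigr\}
\]
when $C>0$, and $k_0:=1$ when $C=0$. For $k=k_0$ we get $C\rho^{k}<\Delta_{\mu_\infty}(N)$. Therefore
\[
\Delta_{\mu_\infty}(N)+\mathcal A_{\mu_k}(N)\ge\Delta_{\mu_\infty}(N)-\mathcal A_{\mu_k}(N)^->0 ,
\]
and Theorem~\ref{thm:hma-interruption} yields $\operatorname{Perf}_{\mu_k}(N)>U_*$. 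If for some smaller $k$ the gain $\mathcal A_{\mu_k}(N)$ is already nonnegative, that $k$ works directly. Either way some $1\le k\le k_0$ succeeds. The explicit $k_0$ doubles as the upper bound on the required cap.

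Two edge cases need a remark. If $k\ge N$, no option can be interrupted before the budget runs out, so $\mathcal J_{\mu_k}(N)=\emptyset$, $\mathcal A_{\mu_k}(N)=0$, and $\mu_k$ coincides with $\mu_\infty$ up to budget $N$; hence $k_0\le N$ can always be assumed. The choice of $u_0$ is inherited from Theorem~\ref{thm:hma-interruption}. I expect the main obstacle to be purely interpretive: turning the informal geometric-decay hypothesis into the quantitative bound above, uniform in $k$ and at fixed $N$. Once that is fixed, the argument is one line.
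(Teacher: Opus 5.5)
Your proposal is correct and follows essentially the paper's route: reduce to Theorem~\ref{thm:hma-interruption} by bounding the interruption loss by a geometric term $C\rho^k$ and choosing $k_0$ logarithmically so that $\Delta_{\mu_\infty}(N)+\mathcal A_{\mu_k}(N)>0$. The only difference is bookkeeping. The paper puts the decay on the per-interruption loss, $\varepsilon_{\mu_k}\le C\rho^k$ with $\mathcal A_{\mu_k}(N)\ge-\varepsilon_{\mu_k}\mathbb E_{\mu_k}|\mathcal J_{\mu_k}(N)|$, and uses $|\mathcal J_{\mu_k}(N)|\le\lfloor N/k\rfloor$ to bound the total loss by $NC\rho^k$. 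It then takes $k_0=\max\{1,\lceil\log(2NC/\Delta_{\mu_\infty}(N))/(-\log\rho)\rceil\}$, which keeps half the margin and makes $k_0=O(\log N)$ explicit; you instead absorb the factor $N$ into your constant.
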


\paragraph{Empirically selected cap range: $k\le5$.}
Figure~\ref{fig:hma-prefix-retention} compares six goal and five \nta{} configurations on $n=75$ MLE-bench tasks~\citep{chan2024mlebench}. The prefix curves show that $k=5$ preserves most observed gains with fewer within-option experiments, motivating candidate caps $k\le5$. Detailed analysis are presented in Appendix~\ref{app:interruption}; statistical and empirical details are provided in Appendix~\ref{app:empirical}.

\begin{figure}[h]
\centering
\begin{minipage}[c]{.46\linewidth}
 \centering
 \includegraphics[width=\linewidth]{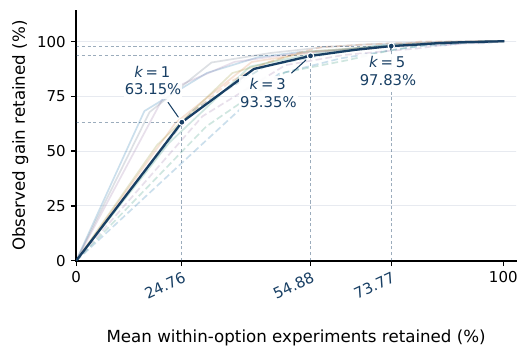}
 \vspace{-12pt}
\caption{\textbf{Short prefixes retain most observed gains.}
The horizontal axis reports the mean percentage of within-option experiments retained by the first $k$ experiments, and the vertical axis reports the corresponding percentage of observed gains retained. Faint curves represent individual configurations, the dark curve averages gain retention equally across configurations, and guides mark $k=1,3,5$. Across the eleven configurations, $k=5$ retains 97.83\% of observed gains with a mean within-option experiment retention of 73.77\%, supporting candidate caps $k\le5$.}
 \label{fig:hma-prefix-retention}
\end{minipage}\hfill
\begin{minipage}[c]{.51\linewidth}
\begin{algorithm}[H]
\caption{\method{} with Fixed-$k$ Interruption}
\label{alg:hma}
\footnotesize
\raggedright
\noindent\textbf{Require:} Workflow $\mu_k=(A,B,u_0,k)$; task distribution $\nu_0$; total time $t_{\mathrm{total}}$; final review time $t_{\mathrm{rev}}$.\par
\noindent\textbf{Ensure:} Final candidate $\widehat{E}$.\par
\begin{algorithmic}[1]
\STATE \textcolor{blue}{\textbf{Stage 1: Workflow Initialization}}
\STATE $x_0=(W_0,E_0,\emptyset)\sim\nu_0$.
\STATE $t_{\mathrm{end}}\gets t_{\mathrm{total}}$; $t_{\mathrm{s}}\gets t_{\mathrm{end}}-t_{\mathrm{rev}}$.
\STATE $i\gets0$; $u\gets u_0$; $\mathcal E\gets\emptyset$.
\STATE \textcolor{blue}{\textbf{Stage 2: Alternate Options}}
\WHILE{$t<t_{\mathrm{s}}$}
 \STATE $\ell\gets0$; $x_{i,0}\gets x_i$.
 \WHILE{$\beta_i^{(u)}(\ell,t;k)=0$}
  \STATE $x_{i,\ell+1}\gets\pi_i^{(u)}(x_{i,\ell})$.
  \STATE Append $(E_{i,\ell+1},u)$ to $\mathcal E$.
  \STATE $\ell\gets\ell+1$.
 \ENDWHILE
 \STATE $x_{i+1}\gets f(x_{i,-1})$.
 \STATE $i\gets i+1$; $u\gets\bar u$.
\ENDWHILE
\STATE \textcolor{blue}{\textbf{Stage 3: Final Review}}
\STATE $\widehat{E}\gets E_{-1}$.
\IF{$\lvert\mathcal E\rvert\ge2 \land t<t_{\mathrm{end}}$}
 \STATE $\widehat{E}\gets\textsc{Review}^{(\bar{u}_{-1})}(\mathcal E,E_{-1},t_{\mathrm{end}})$.
\ENDIF
\STATE \textbf{return} $\widehat{E}$.
\end{algorithmic}
\end{algorithm}
\end{minipage}
\end{figure}

\subsection{\method{} Workflow Overview}
\label{sec:hma-system}

In this section, we describe the implementation of the \method{} workflow, including alternating agent execution and final review, as summarized in Algorithm~\ref{alg:hma}.

\paragraph{Agent Execution.}
Algorithm~\ref{alg:hma} implements $\mu_k=(A,B,u_0,k)$ from $x_0\sim\nu_0$. Here $t$ is elapsed time, $t_{\mathrm{end}}=t_{\mathrm{total}}$ is the final deadline, and $t_{\mathrm{s}}=t_{\mathrm{end}}-t_{\mathrm{rev}}$ ends exploration. Agent $u$ executes $\pi_i^{(u)}$ until $\beta_i^{(u)}(\ell,t;k)=\mathbf1\{\ell\ge\min\{T_i^{(u)}(x_i),k\}\ \lor\ t\ge t_{\mathrm{s}}\}=1$. Then $x_{i,-1}=x_{i,\ell}$ and $x_{i+1}=f(x_{i,-1})$ reset context for agent $\bar u$.

\paragraph{Final review.}
The archive $\mathcal E$ stores evaluated candidates and their authors in order; $(E_{-1},u_{-1})$ is the latest pair. We reserve the final $t_{\mathrm{rev}}$ for the other agent $\bar u_{-1}$ to review the archive and select $\widehat E\gets\textsc{Review}^{(\bar u_{-1})}(\mathcal E,E_{-1},t_{\mathrm{end}})$, with $E_{-1}$ as the fallback. This cross-agent check seeks agreement on the best archived candidate without test-set scores, approximating the result-retention objective of Assumption~\ref{ass:hma-retention} at final selection. 

\section{Experiment}
\label{sec:hma-experiments}

\subsection{Experiment Setup}
\label{sec:hma-experiment-setup}

\paragraph{Implementation details.}
We implement \method{} on Humanize~\citep{liu2026humanizejudgementengineeringagentic}, following Algorithm~\ref{alg:hma} with GPT, Kimi, GLM, DeepSeek, and Claude agents, each using their native harness and \texttt{max} reasoning effort. For MLE-bench, each task run uses one NVIDIA A10 GPU and a six-hour budget: agents alternate under $k=5$, reset context at each handoff, and reserve the final 15 minutes for final review. Exact model--harness mappings, hardware, baseline configurations, and framework details are given in Appendix~\ref{app:experiment-details}.

\paragraph{Benchmark.}
Our main evaluation uses MLE-bench~\citep{chan2024mlebench}, comprising 75 Kaggle competitions requiring agents to develop machine learning solutions. We report the percentage of tasks earning any medal (All) or a gold medal (Gold), supplemented by complexity-specific and leaderboard-based metrics in Table~\ref{tab:hma-main-results}. Test-set scores are unavailable to agents and used only for offline evaluation. Section~\ref{sec:hma-aopt-cap3} supplements this evaluation with an exploratory application to kernel optimization.

\subsection{Main Comparison}
\label{sec:hma-main-comparison}

\begin{table*}[h]
\centering
\caption{\textbf{\method{} exceeds the stronger constituent agent in overall medal rate.}
All performance metrics are percentages. Lite, Medium, and High report medal rates by task complexity. Med+ measures the percentage outperforming the human leaderboard median; Mean Percentile averages leaderboard percentiles across tasks. \textit{Avg. of Single} averages the two constituent six-hour goal baselines, and signed gains measure \method{}'s improvement over this average in percentage points. \method{} starts with the first listed model. Bold and underlined medal rates mark the best and second-best results. Statistical details and available \method{} standard deviations appear in Appendix~\ref{app:experiment-evaluation}.}
\label{tab:hma-main-results}
\begingroup
\footnotesize
\setlength{\tabcolsep}{2.0pt}
\renewcommand{\arraystretch}{0.93}
\newcommand{\hmastat}[2]{#1{\scriptsize\,\ensuremath{\pm}\,#2}}
\resizebox{1.0\linewidth}{!}{%
\begin{tabular}{l @{\hskip2.5pt} | c | c c c | c c c | c}
\toprule
\textbf{Workflow}
& \makecell{\textbf{Time}\\\textbf{(h)}}
& \multicolumn{3}{c|}{\textbf{Medal Rate (\%)}}
& \multicolumn{3}{c|}{\textbf{Overall Metrics (\%)}}
& \makecell{\textbf{Mean}\\\textbf{Percentile (\%)}} \\
\cmidrule(lr){3-5}\cmidrule(lr){6-8}\cmidrule(lr){9-9}
& & \makecell{\textbf{Lite}\\\textbf{(22)}}
& \makecell{\textbf{Medium}\\\textbf{(38)}}
& \makecell{\textbf{High}\\\textbf{(15)}}
& \makecell{\textbf{All}\\\textbf{(75)}} & \textbf{Gold} & \textbf{Med+} & \\
\midrule
\multicolumn{9}{l}{\textit{Single-Agent Baselines with Native Harnesses (\texttt{/goal})}} \\
\midrule
Claude Opus 5 & 6 & \hmastat{80.3}{1.5} & \hmastat{\underline{78.1}}{2.3} & \hmastat{46.7}{0.0} & \hmastat{72.4}{1.6} & \hmastat{\underline{50.7}}{0.8} & \hmastat{\underline{85.3}}{1.3} & \hmastat{84.8}{0.6} \\
GPT-5.6-sol & 6 & \hmastat{80.3}{1.5} & \hmastat{69.3}{2.3} & \hmastat{46.7}{6.7} & \hmastat{68.0}{1.3} & \hmastat{47.6}{2.5} & \hmastat{81.3}{0.8} & \hmastat{81.3}{1.3} \\
GLM-5.3 & 6 & \hmastat{77.3}{0.0} & \hmastat{57.0}{0.9} & \hmastat{40.0}{0.0} & \hmastat{59.6}{0.4} & \hmastat{38.2}{2.2} & \hmastat{75.1}{1.2} & \hmastat{75.7}{0.9} \\
DeepSeek-V4.1-Flash & 6 & \hmastat{77.3}{2.6} & \hmastat{61.4}{6.3} & \hmastat{37.8}{2.2} & \hmastat{61.3}{4.1} & \hmastat{37.8}{0.9} & \hmastat{74.2}{2.4} & \hmastat{74.7}{2.3} \\
Kimi K3 & 6 & \hmastat{77.3}{2.6} & \hmastat{45.6}{2.3} & \hmastat{42.2}{2.2} & \hmastat{54.2}{1.6} & \hmastat{34.7}{1.3} & \hmastat{73.3}{2.3} & \hmastat{73.8}{1.6} \\
DeepSeek-V4-Flash & 6 & \hmastat{54.5}{6.9} & \hmastat{24.6}{1.8} & \hmastat{24.4}{5.9} & \hmastat{33.3}{3.5} & \hmastat{15.6}{1.6} & \hmastat{50.2}{3.2} & \hmastat{51.8}{2.8} \\
\midrule
\multicolumn{9}{l}{\textit{Open-Source Agent Baselines}} \\
\midrule
\makecell[l]{ScienceFlow\\{\scriptsize DeepSeek-V4-Flash-Preview}} & 24 & \hmastat{80.3}{1.5} & \hmastat{74.6}{0.9} & \hmastat{44.4}{2.2} & \hmastat{70.2}{1.2} & -- & -- & -- \\
\makecell[l]{MLEvolve\\{\scriptsize Gemini-3.1-Pro-preview}} & 12 & \hmastat{80.3}{1.5} & \hmastat{64.0}{0.9} & \hmastat{46.7}{0.0} & \hmastat{65.3}{0.8} & \hmastat{34.7}{0.0} & \hmastat{76.0}{2.3} & -- \\
\makecell[l]{ML-Master 2.0\\{\scriptsize DeepSeek-V3.2-Speciale}} & 24 & \hmastat{75.8}{1.5} & \hmastat{50.9}{3.5} & \hmastat{42.2}{2.2} & \hmastat{56.4}{2.5} & \hmastat{19.6}{0.9} & \hmastat{63.1}{1.2} & -- \\
\midrule
\multicolumn{9}{l}{\textit{\method{} (Ours)}} \\
\midrule
\multicolumn{9}{l}{\textbf{Claude Opus 5 $\leftrightarrow$ GPT-5.6-sol}} \\
\quad Avg. of Single & 6 & \hmastat{80.3}{1.1} & \hmastat{73.7}{1.6} & \hmastat{46.7}{3.3} & \hmastat{70.2}{1.0} & \hmastat{49.1}{1.3} & \hmastat{83.3}{0.8} & \hmastat{83.0}{0.7} \\
\rowcolor{gray!15}
\quad \method{} (ours) & 6 & \underline{84.8}\gain{4.5} & \textbf{80.7}\gain{7.0} & \textbf{62.2}\gain{15.6} & \textbf{78.2}\gain{8.0} & \textbf{53.8}\gain{4.7} & \textbf{90.7}\gain{7.3} & \textbf{89.0}\gain{5.9} \\
\midrule
\multicolumn{9}{l}{\textbf{GPT-5.6-sol $\leftrightarrow$ Claude Opus 5}} \\
\quad Avg. of Single & 6 & \hmastat{80.3}{1.1} & \hmastat{73.7}{1.6} & \hmastat{46.7}{3.3} & \hmastat{70.2}{1.0} & \hmastat{49.1}{1.3} & \hmastat{83.3}{0.8} & \hmastat{83.0}{0.7} \\
\rowcolor{gray!8}
\quad \method{} (ours) & 6 & \textbf{86.4}\gain{6.1} & 76.3\gain{2.6} & 46.7\gain{0.0} & \underline{73.3}\gain{3.1} & \underline{50.7}\gain{1.6} & \underline{85.3}\gain{2.0} & \underline{85.4}\gain{2.3} \\
\midrule
\multicolumn{9}{l}{\textbf{GPT-5.6-sol $\leftrightarrow$ DeepSeek-V4.1-Flash}} \\
\quad Avg. of Single & 6 & \hmastat{78.8}{1.5} & \hmastat{65.4}{3.4} & \hmastat{42.2}{3.5} & \hmastat{64.7}{2.1} & \hmastat{42.7}{1.3} & \hmastat{77.8}{1.2} & \hmastat{78.0}{1.3} \\
\rowcolor{gray!8}
\quad \method{} (ours) & 6 & 81.8\gain{3.0} & 76.3\gain{10.9} & 46.7\gain{4.5} & 72.0\gain{7.3} & \underline{50.7}\gain{8.0} & \underline{85.3}\gain{7.6} & 84.4\gain{6.4} \\
\midrule
\multicolumn{9}{l}{\textbf{GPT-5.6-sol $\leftrightarrow$ Kimi K3}} \\
\quad Avg. of Single & 6 & \hmastat{78.8}{1.5} & \hmastat{57.5}{1.6} & \hmastat{44.4}{3.5} & \hmastat{61.1}{1.0} & \hmastat{41.1}{1.4} & \hmastat{77.3}{1.2} & \hmastat{77.6}{1.0} \\
\rowcolor{gray!8}
\quad \method{} (ours) & 6 & 81.8\gain{3.0} & 71.1\gain{13.6} & \underline{53.3}\gain{8.9} & 70.7\gain{9.6} & \underline{50.7}\gain{9.6} & \underline{85.3}\gain{8.0} & 83.9\gain{6.4} \\
\midrule
\multicolumn{9}{l}{\textbf{GPT-5.6-sol $\leftrightarrow$ GLM-5.3}} \\
\quad Avg. of Single & 6 & \hmastat{78.8}{0.8} & \hmastat{63.2}{1.2} & \hmastat{43.3}{3.3} & \hmastat{63.8}{0.7} & \hmastat{42.9}{1.7} & \hmastat{78.2}{0.7} & \hmastat{78.5}{0.8} \\
\rowcolor{gray!8}
\quad \method{} (ours) & 6 & 77.3\textcolor{red!70!black}{\scalebox{0.75}{\tiny{$-1.5$}}} & 73.7\gain{10.5} & 46.7\gain{3.4} & 69.3\gain{5.6} & \underline{50.7}\gain{7.8} & 82.7\gain{4.4} & 81.5\gain{2.9} \\
\midrule
\multicolumn{9}{l}{\textbf{DeepSeek-V4.1-Flash $\leftrightarrow$ Kimi K3}} \\
\quad Avg. of Single & 6 & \hmastat{77.3}{1.9} & \hmastat{53.5}{3.4} & \hmastat{40.0}{1.6} & \hmastat{57.8}{2.2} & \hmastat{36.2}{0.8} & \hmastat{73.8}{1.6} & \hmastat{74.3}{1.4} \\
\rowcolor{gray!8}
\quad \method{} (ours) & 6 & 77.3\gain{0.0} & 63.2\gain{9.6} & 40.0\gain{0.0} & 62.7\gain{4.9} & 38.7\gain{2.4} & 80.0\gain{6.2} & 78.1\gain{3.8} \\
\bottomrule
\end{tabular}%
}
\endgroup
\end{table*}

\paragraph{\method{} outperforms single-agent and specialized workflow baselines.}
Table~\ref{tab:hma-main-results} compares \method{} with single agents using native harnesses, evaluated in our experiments, and specialized workflows using results reported in their original papers~\citep{zhu2026mlmaster2,du2026mlevolve,zhao2026scienceflow}. Opus-first \method{} achieves 78.2\% overall medals, exceeding our strongest single-agent baseline at 72.4\% and the highest published workflow baseline at 70.2\%. \method{} and our single-agent baselines share a six-hour budget, whereas the published workflows use 12--24 hours. \method{} achieves these gains without task-specific knowledge bases or additional prompt engineering, while retaining the agents' existing internal harnesses.

\paragraph{\method{} improves heterogeneous agent pairs across capability levels.}
Across five model pairs with constituent medal rates of 54.2\%--72.4\%, all six \method{} configurations improve overall medal rate over the stronger constituent, and Gold, Med+, and mean percentile over constituent averages. For Opus--GPT, both starting orders retain the advantage, reaching 78.2\% and 73.3\% medals versus the stronger constituent's 72.4\%, indicating that the observed advantage is not tied to a single favorable initial agent.

\subsection{Experiment Analysis and Ablation Study}
\label{sec:hma-ablation}

\begin{figure}[h]
\centering
\includegraphics[width=.46\linewidth]{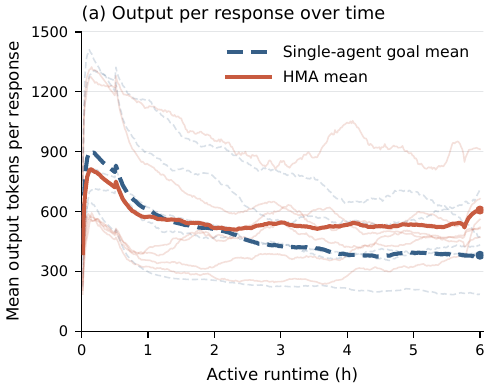}\hspace{.02\linewidth}%
\includegraphics[width=.46\linewidth]{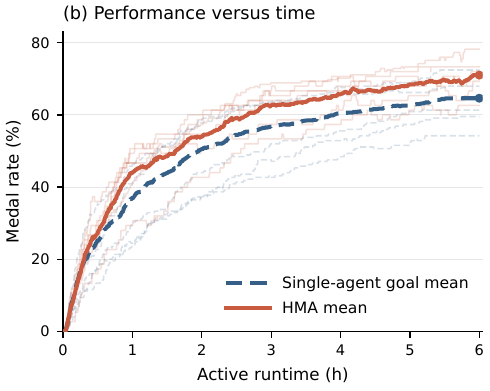}
\caption{\textbf{\method{} sustains response length and continued improvement.}
(a) Mean output tokens per response and (b) medal rate, both plotted against active runtime.
Bold red and dashed blue curves report weighted means for \method{} and its matched constituent goal references, respectively; each \method{} configuration receives equal weight, and each goal reference is the equally weighted mean of its two constituent agents.
Faint curves show individual configurations.}
\label{fig:hma-all-growth}
\end{figure}

\paragraph{\method{} sustains exploration and continued improvement.}
The response-level curve in Figure~\ref{fig:hma-all-growth} (a) uses the same 30-minute trailing window. From hours 0--2 to hours 3--5, \method{} retains 86.4\% of its mean output tokens per response, versus 60.7\% for matched goal references. In Figure~\ref{fig:hma-all-growth} (b), the six \method{} configurations reach 42.7\%--60.0\% medals within two hours and continue improving through six hours, contrasting with the single-agent deceleration in Section~\ref{sec:hma-motivation}.

\paragraph{Final review improves historical any-medal retention.}
Holding exploration fixed, final review changes 8.9\% of selections and increases any-medal outcomes from 311 to 319, with no losses relative to the pre-review candidate.
Among the 450 task runs, 330 produce at least one medal-winning exploration candidate; final selection retains a medal in 319 of these runs (96.7\%), compared with 311 (94.2\%) before final review.
These results quantify how terminal review supports the best-result retention objective in Assumption~\ref{ass:hma-retention}.

\paragraph{Additional analyses and ablations.}
Appendix~\ref{app:main-curves} details the weighting and aggregation used to construct the mean curves in Figure~\ref{fig:hma-all-growth}, together with trajectory construction and task-level comparisons. Appendix~\ref{app:ablations} further examines starting-order effects, model time allocation, and partner participation (Appendix~\ref{app:starting-order}); the interruption-parameter ablations(Appendix~\ref{app:fixed-cap-ablation}); natural termination and the fixed-duration approximation (Appendix~\ref{app:option-termination}); and final review's effects on final selection and historical any-medal retention (Appendix~\ref{app:review}).

\subsection{Case Study}
\label{sec:hma-case-study}

\begin{figure}[h]
\centering
\includegraphics[width=\linewidth]{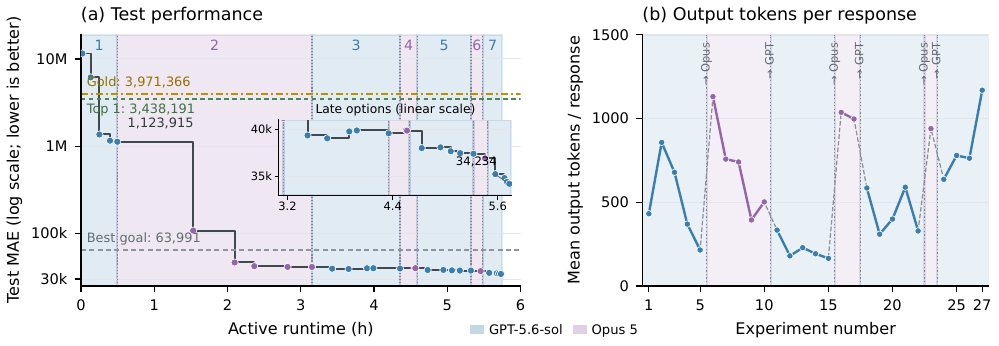}
\vspace{-12pt}
\caption{\textbf{\method{} GPT--Opus alternation sustains improvement on INGV.}
(a) Test MAE across 27 experiments, using a logarithmic axis with a linear inset. (b) Mean output tokens per response for each experiment. Blue and purple regions identify GPT and Opus; dotted lines mark handoffs, and horizontal lines show reference scores.}
\label{fig:hma-case-ingv}
\end{figure}

On INGV Volcanic Eruption Prediction, GPT and Opus improve the historical best MAE in five of six post-handoff options (Figure~\ref{fig:hma-case-ingv}). Their final MAE of 34,234 is 46.5\% below the best of six constituent goal runs. Mean response length also increases on four of five returns to a previously used model, illustrating continued refinement of shared task state alongside renewed output. Appendix~\ref{app:case-studies} provides detailed measurements and additional cross-model cases.

\subsection{Exploratory Application Beyond MLE-bench}
\label{sec:hma-aopt-cap3}

\begin{figure}[h]
\centering
\includegraphics[width=\linewidth]{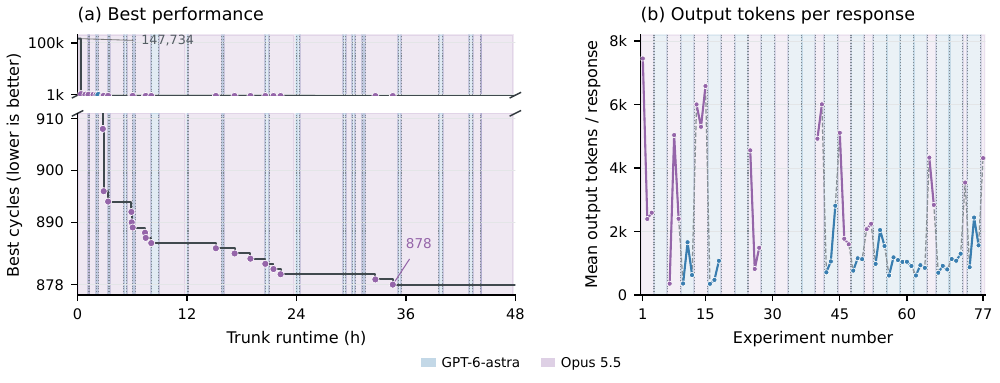}
\vspace{-12pt}
\caption{\textbf{\method{} Opus--GPT alternation on AOPT.}
(a) Best accepted cycle count over 48 hours, using a broken vertical axis (upper: logarithmic; lower: linear) to resolve later improvements. (b) Mean output tokens per response for each experiment with available transcripts; gaps indicate missing records. Blue and purple regions identify GPT and Opus; dotted lines mark option boundaries.}
\label{fig:hma-aopt-cap3}
\end{figure}

The preceding case illustrates continued refinement within MLE. We next explore the workflow on Anthropic's Original Performance Take-Home (AOPT)~\citep{anthropic2026performancetakehome}, a kernel-optimization task measured in simulated execution cycles. Claude Opus 5.5 and GPT-6-astra alternate with a cap of three accepted submissions per session (cap 3). The selected 48-hour run reaches 878 cycles at 34 hours 33 minutes and retains this score in its final candidate. Figure~\ref{fig:hma-aopt-cap3} pairs the performance trajectory with response-length traces, following the presentation in Figure~\ref{fig:hma-case-ingv}.

For context, 878 cycles matches the 10th-ranked score on the Paradigm community leaderboard as of October 8, 2026~\citep{paradigm2026anthropicchallenge}, just 14 cycles (1.62\%) above the leading score of 864 cycles. The leaderboard uses a separate nine-seed judge, so this is a comparison of reported scores rather than a verified leaderboard placement. This single-run case provides preliminary evidence of the workflow's applicability to kernel optimization beyond MLE-bench over a longer horizon.

\section{Related Work}
\label{gen_inst}

\paragraph{MLE agents and harness-level systems.}
MLE-bench evaluates autonomous agents on Kaggle competitions~\citep{chan2024mlebench}. Recent MLE systems improve individual agents through search, reflection, memory, branch reuse, recoverable workspaces, or task-conditioned interfaces~\citep{toledo2025airesearch,chen2026mars,du2026mlevolve,zhu2026mlmaster2,zhao2026scienceflow,yu2026recuris,zhang2026jitagent,ye2026cake}. These methods modify how one model acts within its harness. \method{} instead leaves each native model--harness pair unchanged and coordinates complete agents at the workflow level.

\paragraph{Multi-agent and long-horizon workflows.}
Prior workflows sustain long-horizon execution through reusable experience, reflection, tree search, workflow search, communication, role assignment, checkpointing, or independent review~\citep{shinn_reflexion_2023,zhao_expel_2024,zhou2024lats,zhang2025aflow,wu2024autogen,qian2024chatdev,li_agentgit_2025}. Recent handoff studies examine candidate transfer, learned control exchange, and the quality cost of continuing another model's trajectory~\citep{luo2026relay,li2026handoff,ganz2026handoff}. \method{} studies fixed-cap alternation of complete native agents on one shared workspace, with a context reset at every handoff.

\paragraph{Options and semi-MDPs.}
Semi-Markov decision processes model actions with variable duration, while options represent such actions through an internal policy and termination rule~\citep{bradtke1994reinforcement,dietterich2000hierarchical}. The interruption theorem characterizes when terminating an option early preserves or improves continuation value~\citep{sutton1999between}; subsequent work studies learned and off-policy termination~\citep{bacon2017optioncritic,harutyunyan2018offpolicytermination}. \method{} models each agent session as an option and applies this framework to fixed-cap alternation with context reset.

\section{Conclusion}

\method{} alternates native agents, resetting context while preserving shared task state. Our \crsmdp{} analysis gives sufficient conditions and budget bounds for exceeding single-agent reference limits and conditions for preserving this advantage under fixed-cap interruption. Across six MLE-bench configurations, \method{} outperforms the stronger constituent in medal rate under equal six-hour budgets, reaching 78.2\% at best. A supplementary AOPT case illustrates application to kernel optimization over 48 hours; establishing comparative gains beyond MLE-bench remains future work. Together, these results motivate further study of workflow-level coordination for sustained exploration across iterative optimization tasks.

\subsection*{Acknowledgments}
We thank Zijian Huang, Chengxing, Jing Xiong, Zhengkai Lin, Boshi Zhang, Yan Hu, Dongyun Zou, and Chang Liu for their participation in the early exploration of this project and their contributions to workflow design and experiments.

\subsection*{AI use statement}

In this work, we used generative AI tools for language polishing, grammar checking, and generating portions of the code. The research ideas, methodology, experimental design, and interpretation of results were developed by the authors. We reviewed all AI-assisted text to ensure that it accurately conveyed our intended meaning. LLM-generated code was verified and tested for correctness by two authors. We take full responsibility for the final content of this work, including all text, claims, and artifacts produced with the assistance of generative AI.

\begingroup
\fontencoding{T1}\selectfont
\bibliography{iclr2027_conference}
\endgroup
\bibliographystyle{iclr2027_conference}

\clearpage
\appendix
\setlength{\abovedisplayskip}{10pt plus 2pt minus 2pt}
\setlength{\belowdisplayskip}{10pt plus 2pt minus 2pt}
\setlength{\abovedisplayshortskip}{8pt plus 2pt minus 2pt}
\setlength{\belowdisplayshortskip}{8pt plus 2pt minus 2pt}
\setlength{\jot}{4pt}

\makeatletter
\let\hmaAppendixAddcontentsline\addcontentsline
\renewcommand{\addcontentsline}[3]{%
  \hmaAppendixAddcontentsline{#1}{#2}{#3}%
  \def\hmaAppendixContentsFile{#1}%
  \def\hmaMainContentsFile{toc}%
  \ifx\hmaAppendixContentsFile\hmaMainContentsFile
    \hmaAppendixAddcontentsline{apc}{#2}{#3}%
  \fi
}
\pdfbookmark[0]{Appendix Contents}{hma-appendix-contents}
\section*{Appendix Contents}
\begingroup
\small
\setcounter{tocdepth}{2}
\@starttoc{apc}
\endgroup
\makeatother

\clearpage
\section{Theoretical Details and Proofs}
\label{app:theory}

We organize the formal supplement in the same order as the Method section: Appendix~\ref{app:formulation} defines the finite-budget \crsmdp{} and exact performance curves; Appendix~\ref{app:advantage} proves the cooperative alternation guarantee and budget bound; Appendix~\ref{app:interruption} handles \nta{}, fixed-$k$ interruption, and sufficient-cap conditions.

\subsection{Model Details and Exact Performance}
\label{app:formulation}

We supplement Section~\ref{sec:hma-preliminaries} with finite-budget definitions, the general option recursion, and exact performance expressions used in the alternation proof.

\begin{definition}[State and finite budget]
\leavevmode\phantomsection\label{def:app-state}
The state from Section~\ref{sec:hma-preliminaries} implicitly includes agent identity, option age, and any scheduling history needed for the Markov property~\citep[Section~3.2]{dietterich2000hierarchical}. For session $o_i^{(u_i)}$ at $x_i$, let $n_i$ be the completed experiment count and $t_i$ the remaining budget. Then
\begin{equation}
 n_i:=\sum_{h=0}^{i-1}d_h,\qquad t_i:=N-n_i,\qquad
 d_i=\min\{T_i^{(u_i)}(x_i),k,t_i\}.
 \label{eq:app-option-budget}
\end{equation}
Here $n_0=0$, and $T_i^{(u)}(x_i)$ is the positive, almost-surely finite natural duration, which may depend on the entry state. Thus early natural termination and a final partial option are included; $k=\infty$ gives \nta{}. Values track the remaining budget, vanish when it is exhausted, and no option starts at $t_i=0$.
\end{definition}

\begin{definition}[Option rewards and transitions]
\leavevmode\phantomsection\label{def:app-options}
For $\gamma\in(0,1]$ and the experiment rewards in Section~\ref{sec:hma-preliminaries}, define
\begin{equation}
 \bar r^{(u)}(x):=\mathbb E\!\left[
 \sum_{\ell=0}^{d_i-1}\gamma^\ell r_{n_i+\ell+1}
 \;\middle|\;x_i=x,\ u_i=u\right].
 \label{eq:hma-option-reward}
\end{equation}
Let $\Phi_{k,t}^{(u)}(x',d\mid x)$ be the joint law of duration and next option-entry state, including context reset, and $P_{k,t}^{(u)}(x,x'):=\sum_d\Phi_{k,t}^{(u)}(x',d\mid x)$. The complete-option kernel is $P_k^{(u)}:=P_{k,\infty}^{(u)}$; finite $t$ truncates the same option. Suppressing fixed $k,t$, the general semi-MDP recursion~\citep{sutton1999between,bradtke1994reinforcement} is
\begin{equation}
 Q^\mu(x,o_i^{(u)})=\bar r^{(u)}(x)
 +\sum_{d,x'}\gamma^d\Phi^{(u)}(x',d\mid x)V^\mu(x').
 \label{eq:app-option-bellman}
\end{equation}
At $\gamma=1$, this reduces to~\eqref{eq:hma-option-bellman}, and total reward is $q(E_N)-q(E_0)$.
\end{definition}

\begin{proposition}[Exact performance expressions]
\leavevmode\phantomsection\label{prop:app-performance}
Let $\mathcal P_\mu$ be the one-experiment kernel under $\mu$ on the augmented state, including option boundaries, and $S_\tau:=\{x:q(x)\ge\tau\}$. After workflow termination, retain the terminal state. Then
\begin{equation}
 p_\mu(N;\tau)=\nu_0\mathcal P_\mu^N\mathbf1_{S_\tau},
 \qquad \operatorname{Perf}_\mu(N)=\nu_0\mathcal P_\mu^Nq,
 \label{eq:app-exact-performance}
\end{equation}
where $\mathbf1_{S_\tau}$ indicates target attainment. These expressions give the probability of reaching $S_\tau$ and the expected final quality, respectively, with $\operatorname{Perf}_\mu(N)\ge\tau p_\mu(N;\tau)$.
\end{proposition}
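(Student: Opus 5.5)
The plan is to view the workflow under $\mu$ as a time-homogeneous Markov chain on the augmented state of Definition~\ref{def:app-state}. Because that state includes agent identity, option age, remaining budget and any scheduling history, each single experiment, together with any option boundary and context reset $f$ it triggers, is one application of a fixed kernel $\mathcal P_\mu$. I will define $\mathcal P_\mu(y,\cdot)$ casewise. Inside an option with $\beta^{(u)}=0$, it is the law of $\pi^{(u)}$'s next experiment. When the experiment ends the option, through natural termination, reaching $\ell=k$, or $t$ hitting zero, it is that law composed with $f$ and with the choice $\mu(\cdot\mid x)$ of the next option. Once $t=0$, it is the identity, which is the convention that we retain the terminal state. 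Under this construction the concatenation of all options over $N$ experiments is a product of $N$ one-step kernels. The same construction reproduces $\Phi^{(u)}_{k,t}$ when we aggregate steps between boundaries, so it is consistent with Definition~\ref{def:app-options}.

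Given this, the Chapman--Kolmogorov equations imply that the law of the augmented state after $N$ experiments, started from $x_0\sim\nu_0$, is $\nu_0\mathcal P_\mu^N$. The retained result $E_N$ is a component of that state. Hence, for any bounded measurable $g$, we have $\mathbb E_{\mu,\nu_0} g(x_N)=\nu_0\mathcal P_\mu^N g$. Taking $g=q$ gives $\operatorname{Perf}_\mu(N)$, and taking $g=\mathbf 1_{S_\tau}$ gives $p_\mu(N;\tau)$, which yields~\eqref{eq:app-exact-performance}. Here $q$ and $\mathbf 1_{S_\tau}$ are read as functions of the augmented state through $E$, and both are bounded because $q\in[0,1]$. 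For finite state spaces this is just matrix multiplication; in general it is the standard composition of Markov kernels.

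The inequality follows from a pointwise bound. Since $q\ge0$, we have $q\ge\tau\mathbf 1_{S_\tau}$ everywhere: on $S_\tau$ we have $q\ge\tau$, and off $S_\tau$ we have $q\ge0$. Positivity of the kernel $\nu_0\mathcal P_\mu^N$ then preserves the bound and gives $\operatorname{Perf}_\mu(N)\ge\tau p_\mu(N;\tau)$. This is a Markov-type inequality.

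The main obstacle is the first step, namely justifying that one experiment is a \emph{homogeneous} Markov step even though the termination rule $d_i=\min\{T_i,k,t_i\}$ depends on option age, budget and a possibly entry-state-dependent natural duration. My approach is to fold $\ell$, $t$, $u$, and whatever information determines $T_i$ (for example, an entry-state record or a pre-sampled duration) into the augmented state, exactly as Definition~\ref{def:app-state} permits. With that information in the state, $\beta$ becomes a deterministic function of the current state. The budget-exhaustion absorption then also ensures that $\mathcal P_\mu^N$ equals the law at workflow termination, rather than continuing past it.
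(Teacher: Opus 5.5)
Your proposal is correct and follows the paper's proof. The paper likewise takes the endpoint law of the augmented chain to be $\nu_0\mathcal P_\mu^N$, evaluates it on $q$ and $\mathbf 1_{S_\tau}$ to get the two identities, and takes expectations of the pointwise bound $q\ge\tau\mathbf 1_{S_\tau}$ for the inequality. The only difference is that you justify why one experiment, including option boundaries, context reset, and the absorbing terminal state, is a single time-homogeneous kernel step; the paper takes this for granted from the augmented-state convention of Definition~\ref{def:app-state}.
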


\begin{proof}[Proof of Proposition~\ref{prop:app-performance}]\itshape
The endpoint law is $\nu_0\mathcal P_\mu^N$; applying it to $\mathbf1_{S_\tau}$ and $q$ gives~\eqref{eq:app-exact-performance}. Taking expectations of $q\ge\tau\mathbf1_{S_\tau}$ yields the inequality.
\end{proof}

\subsection{Alternation Guarantee and Experiment Budget}
\label{app:advantage}
\label{app:budgets}

We follow Section~\ref{sec:hma-main-theory}: cooperative access yields a failure recursion, which proves the alternation guarantee and the experiment budget; we then discuss startup effects.

\begin{definition}[Reference ceiling and cycle budget]
\leavevmode\phantomsection\label{def:app-reference}
For the reference family in Section~\ref{sec:hma-main-theory}, bounded quality and Assumption~\ref{ass:hma-retention} give
\begin{equation}
 U_*:=\max_{b\in\mathfrak B_{A,B}}U_b,\qquad
 U_b:=\lim_{N\to\infty}\operatorname{Perf}_b(N)
 =\mathbb E_{b,\nu_0}\!\left[\lim_{N\to\infty}q(E_N)\right].
 \label{eq:app-reference-limit}
\end{equation}
Fix $\mu_k$ and $\tau>U_*$. Write $u$ for the first agent in each cycle: cycle $m$ executes $o_{2m}^{(u)}$ followed by $o_{2m+1}^{(\bar u)}$, with kernel $M_{\mu_k}$. Before total-budget truncation, the experiments used by $m$ cycles are $C_{\mu_k,m}:=\sum_{i=0}^{2m-1}\min\{T_i^{(u_i)}(x_i),k\}$, so $C_{\mu_k,m}\le2km$ for finite $k$. Write $F_m:=\Pr\nolimits_{\mu_k}(q(x_{2m})<\tau)$, where $\Pr\nolimits_{\mu_k}$ and $\mathbb E_{\mu_k}$ include the initial draw from $\nu_0$.
\end{definition}

\begin{lemma}[Cooperative success and failure recursion]
\leavevmode\phantomsection\label{lem:app-failure}
Under Assumptions~\ref{ass:hma-retention} and~\ref{ass:hma-access}, each cycle has conditional success probability at least $a:=\alpha\beta$ whenever $F_m>0$, and $F_{m+1}\le(1-a)F_m$ for all $m\ge0$.
\end{lemma}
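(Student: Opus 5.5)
The proof conditions on the failure event at the start of cycle $m$ and uses the Markov structure of the \crsmdp{} across the two options of the cycle. Fix $m$ with $F_m>0$ and write $D_m:=\{q(x_{2m})<\tau\}$. Cycle $m$ runs $o_{2m}^{(u)}$ from $x_{2m}$ to $x_{2m+1}$, then $o_{2m+1}^{(\bar u)}$ from $x_{2m+1}$ to $x_{2m+2}$. The success event we track is $\{x_{2m+2}\in\mathcal T\}$. Since $\mathcal T\subseteq S_\tau=\{q\ge\tau\}$, this event implies $q(x_{2m+2})\ge\tau$.

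\textbf{Step 1 (conditional success).} We bound the success probability by passing through $G$:
\begin{equation*}
\Pr(x_{2m+2}\in\mathcal T\mid D_m)\ge\sum_{x\in G}\Pr(x_{2m+1}=x\mid D_m)\,\Pr(x_{2m+2}\in\mathcal T\mid x_{2m+1}=x,D_m).
\end{equation*}
The augmented state of Definition~\ref{def:app-state} carries agent identity and option age, so the Markov property holds at option boundaries. Hence the second factor equals $\sum_{x'\in\mathcal T}P_k^{(\bar u)}(x,x')$, which is at least $\beta$ on $G$. The first factor summed over $G$ is $\Pr(x_{2m+1}\in G\mid D_m)\ge\alpha$ by Assumption~\ref{ass:hma-access}. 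Together these give conditional success probability at least $\alpha\beta=a$.

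\textbf{Step 2 (failure recursion).} By Assumption~\ref{ass:hma-retention}, $q$ is nondecreasing along experiments. Option boundaries leave $E$ unchanged, since $f$ only resets the context. So once $q(x_{2m})\ge\tau$, also $q(x_{2m+2})\ge\tau$, which means $D_{m+1}\subseteq D_m$. It follows that
\begin{equation*}
F_{m+1}=\Pr(D_{m+1}\cap D_m)=F_m\Pr(q(x_{2m+2})<\tau\mid D_m)\le F_m\bigl(1-\Pr(x_{2m+2}\in\mathcal T\mid D_m)\bigr)\le(1-a)F_m.
\end{equation*}
If $F_m=0$, then $F_{m+1}=0$ by the same inclusion, so the inequality holds trivially for all $m\ge0$.

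\textbf{Main obstacle.} The delicate point is Step 1's use of the Markov property. We must replace conditioning on $\{x_{2m+1}=x\}\cap D_m$ by conditioning on $x_{2m+1}=x$ alone, so that the kernel $P_k^{(\bar u)}$ of the assumption applies. This relies on the augmented state and on $P_k^{(\bar u)}$ being the complete-option kernel.

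A second issue is budget truncation: under a finite budget $N$, the last option may be cut short. We sidestep this by working with the untruncated kernel $P_k=P_{k,\infty}$ at cycle boundaries, as Definition~\ref{def:app-reference} does. The budget accounting $C_{\mu_k,m}\le2km$ is then deferred to the theorem, where monotonicity converts $F_m$ bounds into bounds on $\operatorname{Perf}_{\mu_k}(2km)$.
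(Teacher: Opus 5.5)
Your proposal is correct and follows essentially the same route as the paper. You pass through $G$ using the Markov property at the option boundary to get per-cycle success probability at least $\alpha\beta$, then use retention (with $f$ preserving $E$) to make $S_\tau$ absorbing and conclude $F_{m+1}\le(1-a)F_m$, including the trivial $F_m=0$ case. Your explicit decomposition over $x_{2m+1}\in G$ and your remark about working with the untruncated kernel $P_k=P_{k,\infty}$ simply spell out what the paper compresses into ``by the Markov property.''
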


\begin{proof}[Proof of Lemma~\ref{lem:app-failure}]\itshape
The $u$-option enters $G$ with conditional probability at least $\alpha$, and the $\bar u$-option reaches $\mathcal T\subseteq S_\tau$ from each $x\in G$ with probability at least $\beta$. By the Markov property, for $F_m>0$,
\begin{equation}
 \Pr\nolimits_{\mu_k}(x_{2m+2}\in S_\tau\mid x_{2m}\notin S_\tau)
 \ge\beta\Pr\nolimits_{\mu_k}(x_{2m+1}\in G\mid x_{2m}\notin S_\tau)
 \ge\alpha\beta=a.
 \label{eq:app-cycle-success}
\end{equation}
Quality retention makes $S_\tau$ absorbing, so $F_{m+1}\le F_m(1-a)$. If $F_m=0$, it also gives $F_{m+1}=0$.
\end{proof}

\begin{appendixtheorem}[Joint alternation exceeds the reference limits; Theorem~\ref{thm:hma-advantage} restated]
\leavevmode\phantomsection\label{thm:app-advantage}
Under Assumptions~\ref{ass:hma-retention} and~\ref{ass:hma-access}, take $a=\alpha\beta$ as a lower bound on conditional success per cycle, and $F_0=\Pr\nolimits_{\mu_k}(q(x_0)<\tau)$. For finite $k$,
\begin{equation}
 \operatorname{Perf}_{\mu_k}(2km)\ge\tau[1-F_0(1-a)^m],
 \qquad U_{\mu_k}:=\lim_{N\to\infty}\operatorname{Perf}_{\mu_k}(N)\ge\tau>U_*.
 \label{eq:app-alternation-guarantee}
\end{equation}
\end{appendixtheorem}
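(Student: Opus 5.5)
The plan is to combine the failure recursion of Lemma~\ref{lem:app-failure} with the exact performance expressions of Proposition~\ref{prop:app-performance}. First, iterate $F_{m+1}\le(1-a)F_m$ from $m=0$ to get $F_m\le F_0(1-a)^m$ for every integer $m\ge0$. The case $F_j=0$ at some intermediate $j$ is covered by the lemma's absorbing clause, since then all later $F$ vanish.

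Second, convert the state at the start of cycle $m$ into the result retained after $2km$ experiments. By Definition~\ref{def:app-reference}, $m$ complete cycles use $C_{\mu_k,m}\le 2km$ experiments. Hence $x_{2m}$ is reached, or the run has stopped by budget truncation, no later than experiment $2km$. By Assumption~\ref{ass:hma-retention}, $q(E_n)$ is nondecreasing in $n$, and it stays unchanged after termination because the terminal state is retained. Therefore $q(E_{2km})\ge q(x_{2m})$ pathwise.

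One subtlety must be handled here. If truncation happens before cycle $m$ completes, $x_{2m}$ must be interpreted as the retained terminal state. Since $N=2km\ge C_{\mu_k,m}$, truncation cannot occur before $m$ full cycles. So the events are well defined, and
\begin{equation*}
p_{\mu_k}(2km;\tau)\ \ge\ \Pr\nolimits_{\mu_k}(q(x_{2m})\ge\tau)\ =\ 1-F_m.
\end{equation*}
Combining this with $\operatorname{Perf}_\mu(N)\ge\tau p_\mu(N;\tau)$ from Proposition~\ref{prop:app-performance} gives the first inequality:
\begin{equation*}
\operatorname{Perf}_{\mu_k}(2km)\ \ge\ \tau(1-F_m)\ \ge\ \tau\bigl[1-F_0(1-a)^m\bigr].
\end{equation*}

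Third, obtain the limit. By retention and $q\in[0,1]$, the sequence $q(E_N)$ is nondecreasing and bounded, so $\operatorname{Perf}_{\mu_k}(N)$ is nondecreasing in $N$. Monotone convergence then shows that $U_{\mu_k}$ exists and equals $\mathbb E[\lim_N q(E_N)]$. Along the subsequence $N=2km$, the lower bound tends to $\tau$ because $a\in(0,1]$ gives $(1-a)^m\to0$. Hence $U_{\mu_k}\ge\tau$, and $\tau>U_*$ holds by the choice $\tau\in(U_*,1]$ in Assumption~\ref{ass:hma-access}.

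The main obstacle is the second step. It requires carefully aligning option-indexed states $x_{2m}$ with experiment-indexed results $E_{2km}$, including early natural terminations ($d_i<k$) and the budget-truncation convention. I would handle this by reading $x_{2m}$ as "the state after min of $C_{\mu_k,m}$ and $N$ experiments" and using monotonicity of $q$ along the one-experiment kernel $\mathcal P_{\mu_k}$.
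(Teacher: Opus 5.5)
Your proposal is correct and follows essentially the same route as the paper: iterate Lemma~\ref{lem:app-failure} to get $F_m\le F_0(1-a)^m$, use $C_{\mu_k,m}\le 2km$ with retention and Proposition~\ref{prop:app-performance} to obtain $\operatorname{Perf}_{\mu_k}(2km)\ge\tau p_{\mu_k}(2km;\tau)\ge\tau(1-F_m)$, and conclude with bounded monotone convergence along $N=2km$. Your explicit check that budget truncation cannot occur within the first $2m$ options, which gives $q(E_{2km})\ge q(x_{2m})$ pathwise, spells out an alignment step the paper leaves implicit.
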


\begin{proof}[Proof of Theorem~\ref{thm:app-advantage}]\itshape
Iterating Lemma~\ref{lem:app-failure} gives $F_m\le F_0(1-a)^m$. Since $C_{\mu_k,m}\le2km$, quality retention and Proposition~\ref{prop:app-performance} yield
\begin{equation}
 \operatorname{Perf}_{\mu_k}(2km)\ge\tau p_{\mu_k}(2km;\tau)
 \ge\tau(1-F_m)\ge\tau[1-F_0(1-a)^m].
 \label{eq:app-finite-cycle-bound}
\end{equation}
Bounded monotone convergence ensures that $U_{\mu_k}$ exists; letting $m\to\infty$ gives $U_{\mu_k}\ge\tau>U_*$. The same proof permits early natural termination when Assumption~\ref{ass:hma-access} holds for the resulting kernels, since $C_{\mu_k,m}\le2km$ still holds.
\end{proof}

\begin{corollary}[Experiment budget; Corollary~\ref{cor:hma-budget} restated]
\leavevmode\phantomsection\label{cor:app-single-stage}
Under the theorem's conditions, let $F_0=1$ and fix $p_0\in(0,1)$. For $0<a\ll1$, a sufficient budget for $p_{\mu_k}(N;\tau)\ge p_0$ has the leading form
\begin{equation}
 N_{\mathrm{suff}}(k,p_0,\tau)
 =\frac{2k}{a}\log\frac{1}{1-p_0}\,\bigl[1+O(a)\bigr].
 \label{eq:app-budget-leading}
\end{equation}
\end{corollary}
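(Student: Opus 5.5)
The argument runs through the failure recursion of Lemma~\ref{lem:app-failure} and the finite-cycle bound~\eqref{eq:app-finite-cycle-bound}, read as a statement about $p_{\mu_k}$ instead of $\operatorname{Perf}_{\mu_k}$. With $F_0=1$, iterating the lemma gives $F_m\le(1-a)^m$.

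The first step uses the cycle budget $C_{\mu_k,m}\le 2km$ from Definition~\ref{def:app-reference}. By retention (Assumption~\ref{ass:hma-retention}), $S_\tau$ is absorbing, so after $2km$ experiments the success event contains $\{x_{2m}\in S_\tau\}$. This yields
\[
p_{\mu_k}(2km;\tau)\ge 1-(1-a)^m .
\]
The same containment also shows that $p_{\mu_k}(N;\tau)$ is nondecreasing in $N$. That monotonicity handles budgets $N$ that are not multiples of $2k$: for any $N\ge 2km$ we get $p_{\mu_k}(N;\tau)\ge p_{\mu_k}(2km;\tau)$.

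Next, I choose the smallest integer $m$ with $(1-a)^m\le 1-p_0$, namely
\[
m^*=\left\lceil \frac{\log(1/(1-p_0))}{-\log(1-a)}\right\rceil .
\]
The exact sufficient budget is then
\[
N_{\mathrm{suff}}=2km^*\le \frac{2k\log(1/(1-p_0))}{-\log(1-a)}+2k .
\]
This is the "exact budget" the paper refers to.

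The last step extracts the leading form. Since $-\log(1-a)=a\bigl(1+a/2+O(a^2)\bigr)$, we have $1/(-\log(1-a))=(1/a)\,[1+O(a)]$. The ceiling adds at most $2k$. Because $p_0$ is fixed, $\log(1/(1-p_0))$ is a positive constant, so $2k=\frac{2k}{a}\log\frac{1}{1-p_0}\cdot O(a)$. Combining these gives~\eqref{eq:app-budget-leading}.

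The main obstacle is interpretive rather than technical. One has to state clearly that the $[1+O(a)]$ factor absorbs both the logarithm expansion and the integer rounding, with constants depending only on $p_0$. One also needs the monotonicity-in-$N$ argument so that "sufficient" holds for all $N\ge N_{\mathrm{suff}}$ and not only at multiples of $2k$. I would also note that early natural termination only shortens cycles, so the bound $C_{\mu_k,m}\le 2km$ and the conclusion still hold in that case.
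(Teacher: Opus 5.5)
Your proposal is correct and follows essentially the same route as the paper: iterate the failure recursion to get $p_{\mu_k}(2km;\tau)\ge 1-(1-a)^m$, take $N_{\mathrm{suff}}=2k\lceil \log(1/(1-p_0))/(-\log(1-a))\rceil$, and absorb both the expansion $-\log(1-a)=a+O(a^2)$ and the rounding error below $2k$ into the $[1+O(a)]$ factor for fixed $p_0$. Your explicit monotonicity-in-$N$ remark, obtained from retention, is a small addition that the paper leaves implicit, and it is valid under the paper's single-process formulation $p_\mu(N;\tau)=\nu_0\mathcal P_\mu^N\mathbf 1_{S_\tau}$.
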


\begin{proof}[Proof of Corollary~\ref{cor:app-single-stage}]\itshape
For $0<a<1$, the bound $p_{\mu_k}(2km;\tau)\ge1-(1-a)^m$ reaches $p_0$ once $m\ge\log(1/(1-p_0))/[-\log(1-a)]$. Hence the exact rounded sufficient budget is
\begin{equation}
 N_{\mathrm{suff}}(k,p_0,\tau)
 =2k\left\lceil\frac{\log(1/(1-p_0))}{-\log(1-a)}\right\rceil.
 \label{eq:app-exact-budget}
\end{equation}
Using $-\log(1-a)=a+O(a^2)$ and a rounding error below $2k$ gives $(2k/a)\log(1/(1-p_0))+O(k)$, equivalent to~\eqref{eq:app-budget-leading} for fixed $p_0$. Finally, $p_0=(U_*+\eta)/\tau$, $0<\eta<\tau-U_*$, gives $\operatorname{Perf}_{\mu_k}(N)\ge U_*+\eta$ by Proposition~\ref{prop:app-performance}.
\end{proof}

\paragraph{Startup and initial-agent effects.}
Startup denotes the early cycles before cooperative progress stabilizes. The initial agent can affect access to $G$ and finite-budget quality. If the two starts later have similar conditional success rates, their subsequent budget scales $2k/a$ are similar, while their startup progress can still differ.

\subsection{Interruption Guarantee and Sufficient Caps}
\label{app:interruption}
\label{app:short-caps}

We expand Section~\ref{sec:hma-fixed-k}. Corollary~\ref{cor:app-nta} extends the alternation guarantee to \nta{}. We define interruption advantages in Definition~\ref{def:app-interruption} and compare the workflows through option-level Bellman recursions. Theorem~\ref{thm:app-interruption} compares fixed-$k$ interruption with \nta{}; Corollaries~\ref{cor:app-candidate-cap}--\ref{cor:app-explicit-cap} give a finite candidate condition and the explicit cap bound underlying Corollary~\ref{cor:hma-sufficient-cap}.

\begin{corollary}[Natural termination and quality limits]
\leavevmode\phantomsection\label{cor:app-nta}
For \ntafull{}, use $\mu_\infty=(A,B,u_0,\infty)$. Under Assumptions~\ref{ass:hma-retention} and~\ref{ass:hma-access} and almost-surely finite option durations, let $M_\tau:=\inf\{m\ge0:q(x_{2m})\ge\tau\}$. Then $C_{\mu_\infty,M_\tau}<\infty$ almost surely and $U_{\mu_\infty}\ge\tau>U_*$. If \nta{} and a finite cap satisfy access for every $\tau\in(U_*,U^\dagger)$, where $U^\dagger\in(U_*,1]$ bounds both reachable quality spaces, then $U_{\mu_k}=U_{\mu_\infty}=U^\dagger$.
\end{corollary}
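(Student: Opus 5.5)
The argument reruns the failure recursion of Lemma~\ref{lem:app-failure} for $\mu_\infty$, replacing the deterministic cycle budget $2km$ by the almost-surely finite random budget $C_{\mu_\infty,m}$. The limit statements then follow from monotone convergence.

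\textbf{Finiteness of $C_{\mu_\infty,M_\tau}$.} Applying Assumption~\ref{ass:hma-access} to the \nta{} kernels $P_\infty^{(u)}$, Lemma~\ref{lem:app-failure} carries over verbatim, since its proof uses only the Markov property at option boundaries, the access constants, and the fact that Assumption~\ref{ass:hma-retention} makes $S_\tau$ absorbing. It does not depend on the cap. Hence
\[
\Pr\nolimits_{\mu_\infty}(M_\tau>m)=F_m\le(1-a)^m\to0,
\]
so $M_\tau<\infty$ almost surely. On the event $\{M_\tau=m\}$, $C_{\mu_\infty,m}$ is a finite sum of $2m$ durations $T_i^{(u_i)}(x_i)$, each almost surely finite by Definition~\ref{def:app-state}. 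A countable union over $m$ of null sets is null, so $C_{\mu_\infty,M_\tau}<\infty$ almost surely.

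\textbf{$U_{\mu_\infty}\ge\tau$.} Retention makes $q(E_N)$ nondecreasing in $N$. On the almost-sure event above, $q(E_N)\ge\tau$ for all $N\ge C_{\mu_\infty,M_\tau}$, so $\lim_N q(E_N)\ge\tau$ almost surely. Since $q\in[0,1]$, bounded monotone convergence gives $U_{\mu_\infty}=\mathbb E\lim_N q(E_N)\ge\tau>U_*$. Alternatively, Fatou's lemma applied to $\tau\mathbf 1_{\{C_{\mu_\infty,M_\tau}\le N\}}$ yields the same bound.

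\textbf{The equality $U_{\mu_k}=U_{\mu_\infty}=U^\dagger$.} This is a sandwich argument.
\begin{itemize}
\item Lower bound: for every $\tau\in(U_*,U^\dagger)$, Theorem~\ref{thm:app-advantage} (finite $k$) and the previous step ($k=\infty$) give $U_{\mu_k}\ge\tau$ and $U_{\mu_\infty}\ge\tau$. Letting $\tau\uparrow U^\dagger$ gives both limits $\ge U^\dagger$.
\item Upper bound: I read ``$U^\dagger$ bounds both reachable quality spaces'' as $q(x)\le U^\dagger$ for every state in $\mathcal X^{\mathrm{alt}}_{\mu_k}$ and in the \nta{} reachable set. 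Then $q(E_N)\le U^\dagger$ almost surely for every $N$, so both limits are $\le U^\dagger$.
\end{itemize}

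The main obstacle is the finiteness step. One must justify that the access assumption, stated for fixed-$k$ kernels, is meaningfully imposed on $P_\infty^{(u)}$. One must also check that a cycle uses finitely many experiments even though option durations are unbounded. I would handle this by conditioning on $\{M_\tau=m\}$ and using only almost-sure finiteness of each $T_i$, not any moment bound. The budget truncation $t_i$ is irrelevant here because we work in the $N\to\infty$ limit, where $t_i=\infty$ and the complete-option kernels $P_{\infty,\infty}^{(u)}$ apply.
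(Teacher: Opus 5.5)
Your proposal is correct and follows the paper's proof essentially step for step. Both arguments rerun the cycle-success bound \eqref{eq:app-cycle-success} for the \nta{} kernels to get $\Pr_{\mu_\infty}(M_\tau>m)\le F_0(1-a)^m\to0$, then conclude that $M_\tau$ and the sum of its finitely many almost-surely finite durations are finite, derive $U_{\mu_\infty}\ge\tau$ from monotone quality, and obtain the equality by combining Theorem~\ref{thm:app-advantage} for every $\tau\in(U_*,U^\dagger)$ with the common upper bound $U^\dagger$; your additional details (the identity $\{M_\tau>m\}=\{q(x_{2m})<\tau\}$ via retention, the countable-union step, and stating explicitly that access is imposed on the uncapped kernels) fill in points the paper leaves implicit.
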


\begin{proof}[Proof of Corollary~\ref{cor:app-nta}]\itshape
Applying~\eqref{eq:app-cycle-success} to \nta{} gives $\Pr\nolimits_{\mu_\infty}(M_\tau>m)\le F_0(1-a)^m\to0$, with $a=\alpha\beta$. Thus $M_\tau$ and the sum of its option durations are almost surely finite. Monotone quality yields $U_{\mu_\infty}\ge\tau$. Combining this argument with Theorem~\ref{thm:app-advantage} for every $\tau\in(U_*,U^\dagger)$ and using the common upper bound gives equality for both limits.
\end{proof}

\begin{definition}[Interruption advantage and reference margin]
\leavevmode\phantomsection\label{def:app-interruption}
Fix $\mu_k=(A,B,u_0,k)$ and $N$. The workflows $\mu_k$ and $\mu_\infty=(A,B,u_0,\infty)$ share internal policies and boundary updates and differ only in option termination. Following Section~\ref{sec:hma-fixed-k}, $x_{i,0}=x_i$ is option $i$'s entry, $x_{i,-1}=x_{i,d_i^{(u_i)}}$ its endpoint before reset, and $x_{i+1,0}=f(x_{i,-1})$ the next entry, with $q(x_{i+1,0})=q(x_{i,-1})$.

At an added cap interruption, \nta{} would continue from $x_{i,-1}$ with the current context. Ending the option instead gives advantage $A^{\mu_\infty}(x_{i,-1}):=V^{\mu_\infty}(x_{i+1,0})-Q^{\mu_\infty}(x_{i,-1},o_i^{(u_i)})$. Both values use \nta{} with the same remaining experiment budget; the next entry activates the other agent with empty context. The interruption contribution is zero at natural termination or budget exhaustion.

The added interruptions are indexed by
\[
 \mathcal J_{\mu_k}(N):=\left\{i\ge0:\sum_{j=0}^{i}d_j^{(u_j)}<N,\ d_i^{(u_i)}=k<T_i^{(u_i)}(x_{i,0})\right\}.
\]
Thus natural termination takes precedence when it coincides with the cap. The expected net gain is $\mathcal A_{\mu_k}(N):=\mathbb E_{\mu_k}\sum_{i\in\mathcal J_{\mu_k}(N)}A^{\mu_\infty}(x_{i,-1})$, and the \nta{} margin is $\Delta_{\mu_\infty}(N):=\operatorname{Perf}_{\mu_\infty}(N)-U_*$.

\end{definition}

\begin{appendixtheorem}[Expected advantage under fixed-k interruption; Theorem~\ref{thm:hma-interruption} restated]
\leavevmode\phantomsection\label{thm:app-interruption}
For the selected initial agent $u_0$, a sufficient condition to exceed $U_*$ is
\begin{equation}
 \Delta_{\mu_\infty}(N)+\mathcal A_{\mu_k}(N)>0
 \quad\Longrightarrow\quad
 \operatorname{Perf}_{\mu_k}(N)>U_*.
 \label{eq:app-interruption-condition}
\end{equation}
\end{appendixtheorem}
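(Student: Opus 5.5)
The plan is to establish the identity $\operatorname{Perf}_{\mu_k}(N)-\operatorname{Perf}_{\mu_\infty}(N)=\mathcal A_{\mu_k}(N)$. Once we have it, the claim follows by substitution: $\operatorname{Perf}_{\mu_k}(N)-U_*=\Delta_{\mu_\infty}(N)+\mathcal A_{\mu_k}(N)>0$. The identity is a performance-difference (telescoping) argument in the style of the interruption theorem of \citet[Theorem~2]{sutton1999between}, applied at $\gamma=1$ with the finite-budget values of Definition~\ref{def:app-state}.

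\textbf{Step 1: write performance as a value.} Since $\gamma=1$ and rewards telescope (Definition~\ref{def:app-options}), we have $\operatorname{Perf}_\mu(N)=\mathbb E_{\nu_0}q(E_0)+\mathbb E_{\nu_0}V^\mu(x_0)$, with values carrying the remaining budget $N$. Both workflows start from the same $x_0$ and the same agent $u_0$. Therefore the difference equals $\mathbb E_{\nu_0}[V^{\mu_k}(x_0)-V^{\mu_\infty}(x_0)]$.

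\textbf{Step 2: couple the workflows at the experiment level.} The policies $\mu_k$ and $\mu_\infty$ share internal policies and boundary updates, and differ only at the indices in $\mathcal J_{\mu_k}(N)$. At every experiment $\mu_k$ either continues the current option or, at an added cap, jumps to $f(x_{i,-1})$ with the other agent.

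Run $\mu_k$ and evaluate $V^{\mu_\infty}$ along its trajectory, treating the state as $(x_{i,\ell},\text{current option})$. Write $V^{\mu_\infty}(x_0)$ as the expected $\mu_k$-reward plus a sum of one-step corrections.

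\emph{At non-interruption steps} (within-option steps, natural terminations, and budget exhaustion), $\mu_k$ acts exactly as $\mu_\infty$ would. By the option Bellman recursion~\eqref{eq:app-option-bellman}, applied in its intra-option form, the correction has zero conditional mean. For this step I would use the martingale/telescoping property $Q^{\mu_\infty}(x_{i,\ell},o)=\mathbb E[r+Q^{\mu_\infty}(x_{i,\ell+1},o)]$ for continuing, and $Q=V$ after natural termination.

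\emph{At an added cap $i\in\mathcal J_{\mu_k}(N)$}, the evaluated continuation switches from $Q^{\mu_\infty}(x_{i,-1},o_i^{(u_i)})$ to $V^{\mu_\infty}(x_{i+1,0})$. This contributes exactly $A^{\mu_\infty}(x_{i,-1})$.

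\textbf{Step 3: close the telescope.} Sum the corrections up to budget exhaustion, where all values vanish. This sum is finite because $N$ is finite and each option uses at least one experiment. Taking expectations and using the tower property then gives $V^{\mu_k}(x_0)-V^{\mu_\infty}(x_0)=\mathbb E_{\mu_k}\sum_{i\in\mathcal J_{\mu_k}(N)}A^{\mu_\infty}(x_{i,-1})$ in expectation over $\nu_0$, which is $\mathcal A_{\mu_k}(N)$.

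\textbf{Main obstacle.} The delicate step is Step 2's zero-mean claim, because of the Markov bookkeeping. $V^{\mu_\infty}$ must be well defined at intermediate states $x_{i,\ell}$ on $\mu_k$'s trajectory. That requires the augmented state (option age, agent identity, remaining budget, per Definition~\ref{def:app-state}) to make $Q^{\mu_\infty}(x_{i,\ell},o)$ depend only on the current augmented state. It also requires the capped trajectory to visit only states at which $\mu_\infty$'s continuation law is defined.

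I would handle this by defining $Q^{\mu_\infty}$ on the augmented state with the remaining budget $t$ tracked. I would then check that the budget truncation $\min\{\cdot,t\}$ is identical under both policies, so that truncated final options contribute no correction. Finally, the tie rule that natural termination takes precedence over the cap must ensure that no spurious $A$-terms appear.
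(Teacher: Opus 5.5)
Your proposal is correct and follows essentially the paper's argument. Like you, the paper proves $\operatorname{Perf}_{\mu_k}(N)-\operatorname{Perf}_{\mu_\infty}(N)=\mathcal A_{\mu_k}(N)$ by telescoping the Bellman recursion of \citet[Theorem~2]{sutton1999between} along the $\mu_k$ trajectory, with $V^{\mu_\infty}$ vanishing at budget exhaustion, and then substitutes $\Delta_{\mu_\infty}(N)$. The only difference is granularity: the paper writes one identity per option, $\mathbb E_{\mu_k}[q(x_{i+1,0})-q(x_{i,0})+V^{\mu_\infty}(x_{i+1,0})\mid x_{i,0}]-V^{\mu_\infty}(x_{i,0})=\mathbb E_{\mu_k}[\iota_i A^{\mu_\infty}(x_{i,-1})\mid x_{i,0}]$, whereas you telescope experiment by experiment via the intra-option recursion, which makes explicit a step the paper leaves implicit.
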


\begin{proof}[Proof of Theorem~\ref{thm:app-interruption}]\itshape
Let $M$ be the number of options executed under $\mu_k$ within budget $N$, with $x_{M,0}$ terminal, and let $\iota_i:=\mathbf1_{\{i\in\mathcal J_{\mu_k}(N)\}}$. For each $i<M$, the option-level Bellman recursion~\citep[proof of Theorem~2]{sutton1999between} gives
\begin{equation}
 \begin{aligned}
 &\mathbb E_{\mu_k}\!\left[q(x_{i+1,0})-q(x_{i,0})+V^{\mu_\infty}(x_{i+1,0})\mid x_{i,0}\right]-V^{\mu_\infty}(x_{i,0})\\
 &\qquad=\mathbb E_{\mu_k}\!\left[\iota_i A^{\mu_\infty}(x_{i,-1})\mid x_{i,0}\right].
 \end{aligned}
 \label{eq:app-option-bellman-difference}
\end{equation}
Since $V^{\mu_\infty}(x_{M,0})=0$ and $x_{0,0}=x_0\sim\nu_0$, summing telescopes:
\begin{equation}
 \begin{aligned}
 \operatorname{Perf}_{\mu_k}(N)-\operatorname{Perf}_{\mu_\infty}(N)
 &=\mathbb E_{\mu_k}\!\left[q(x_{M,0})-q(x_{0,0})-V^{\mu_\infty}(x_{0,0})\right]\\
 &=\mathbb E_{\mu_k}\sum_{i\in\mathcal J_{\mu_k}(N)}A^{\mu_\infty}(x_{i,-1})
 =\mathcal A_{\mu_k}(N).
 \end{aligned}
 \label{eq:app-interruption-identity}
\end{equation}
Thus $\operatorname{Perf}_{\mu_k}(N)-U_*=\Delta_{\mu_\infty}(N)+\mathcal A_{\mu_k}(N)$, which proves~\eqref{eq:app-interruption-condition}.
\end{proof}

\paragraph{Initial-agent dependence of interruption gains.}
The kernels $P_k^{(A)}P_k^{(B)}$ and $P_k^{(B)}P_k^{(A)}$ can induce different early task states, changing both $\Delta_{\mu_\infty}(N)$ and $\mathcal A_{\mu_k}(N)$. Consequently, the preferred initial agent can depend on $k$ and $N$; the associated startup budget effects are discussed in Appendix~\ref{app:advantage}.

\begin{corollary}[A sufficient cap in a finite candidate range]
\leavevmode\phantomsection\label{cor:app-candidate-cap}
Fix $A,B,u_0,N$ with $\Delta_{\mu_\infty}(N)>0$ and integer $K\ge1$. For each candidate cap $1\le k\le K$, let $\mu_k=(A,B,u_0,k)$ and suppose there is $\varepsilon_{\mu_k}\ge0$ with $\mathcal A_{\mu_k}(N)\ge-\varepsilon_{\mu_k}\mathbb E_{\mu_k}|\mathcal J_{\mu_k}(N)|$. Then
\begin{equation}
 \min_{1\le k\le K}\lfloor N/k\rfloor\varepsilon_{\mu_k}<\Delta_{\mu_\infty}(N)
 \quad\Longrightarrow\quad
 \exists\,1\le k\le K:\operatorname{Perf}_{\mu_k}(N)>U_*.
 \label{eq:hma-small-cap-existence}
\end{equation}
\end{corollary}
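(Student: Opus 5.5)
The plan is to reduce the claim to Theorem~\ref{thm:app-interruption} for a single well-chosen cap. Let $k^\star$ attain the minimum on the left of~\eqref{eq:hma-small-cap-existence}; the minimum exists because the candidate range $1\le k\le K$ is finite. The hypothesis then reads $\lfloor N/k^\star\rfloor\varepsilon_{\mu_{k^\star}}<\Delta_{\mu_\infty}(N)$. It therefore suffices to show $\Delta_{\mu_\infty}(N)+\mathcal A_{\mu_{k^\star}}(N)>0$ and invoke~\eqref{eq:app-interruption-condition} with $k=k^\star$. Note that $\mu_\infty$ and $\Delta_{\mu_\infty}(N)$ do not depend on $k$, since $A,B,u_0,N$ are fixed.

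The key intermediate bound is a deterministic count of added interruptions. Every $i\in\mathcal J_{\mu_k}(N)$ has $d_i^{(u_i)}=k$, and all such options precede budget exhaustion. By Definition~\ref{def:app-state}, the completed experiments satisfy $\sum_j d_j^{(u_j)}\le N$, and each $d_j\ge0$. Hence
\[
k\,|\mathcal J_{\mu_k}(N)|\le\sum_{i\in\mathcal J_{\mu_k}(N)}d_i^{(u_i)}\le N
\]
almost surely. This gives $|\mathcal J_{\mu_k}(N)|\le\lfloor N/k\rfloor$, since the cardinality is an integer, and so $\mathbb E_{\mu_k}|\mathcal J_{\mu_k}(N)|\le\lfloor N/k\rfloor$. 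Combining this with the assumed lower bound, and using $\varepsilon_{\mu_k}\ge0$, yields
\[
\mathcal A_{\mu_{k^\star}}(N)\ge-\varepsilon_{\mu_{k^\star}}\lfloor N/k^\star\rfloor>-\Delta_{\mu_\infty}(N),
\]
which is exactly the premise of Theorem~\ref{thm:app-interruption}.

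The main point needing care is the counting step. The strict inequality $\sum_{j\le i}d_j<N$ in the definition of $\mathcal J_{\mu_k}(N)$ should not spoil the bound. It is only a membership condition, and the bound uses nothing more than the total budget constraint $\sum_j d_j\le N$. One must also confirm that the telescoping identity behind Theorem~\ref{thm:app-interruption} holds with a final truncated option. Interruptions at budget exhaustion contribute zero by Definition~\ref{def:app-interruption}, so they are correctly excluded from $\mathcal J$ and from the count.

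Finally, I would remark that the minimum in the hypothesis could be replaced by any single $k$ satisfying the inequality. Taking the minimum is just the weakest form of the condition.
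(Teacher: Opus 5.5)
Your proposal is correct and follows the paper's own proof. The paper bounds $|\mathcal J_{\mu_k}(N)|\le\lfloor N/k\rfloor$ because each added interruption uses a distinct block of $k$ experiments within budget $N$. It then deduces $\mathcal A_{\mu_k}(N)\ge-\lfloor N/k\rfloor\varepsilon_{\mu_k}$, chooses a minimizing cap, and applies Theorem~\ref{thm:app-interruption}; your explicit count via $\sum_j d_j^{(u_j)}\le N$ simply spells out that distinct-block remark.
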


\begin{proof}[Proof of Corollary~\ref{cor:app-candidate-cap}]\itshape
Each added interruption uses a distinct block of $k$ experiments, so $|\mathcal J_{\mu_k}(N)|\le\lfloor N/k\rfloor$ and $\mathcal A_{\mu_k}(N)\ge-\lfloor N/k\rfloor\varepsilon_{\mu_k}$. Choose a minimizing cap in~\eqref{eq:hma-small-cap-existence} and apply Theorem~\ref{thm:app-interruption}.
\end{proof}

\begin{corollary}[Explicit cap under geometric loss decay; quantitative form of Corollary~\ref{cor:hma-sufficient-cap}]
\leavevmode\phantomsection\label{cor:app-explicit-cap}
Fix $A,B,u_0,N$ with $\Delta_{\mu_\infty}(N)>0$. Under the expected loss condition of Corollary~\ref{cor:app-candidate-cap}, assume $\varepsilon_{\mu_k}\le C\rho^k$ for every $\mu_k=(A,B,u_0,k)$, $k\ge1$, with constants $C>0$ and $0<\rho<1$. The policy $\mu_{k_0}:=(A,B,u_0,k_0)$ with the following cap preserves at least half the \nta{} margin:
\begin{equation}
 \begin{aligned}
 k_0&:=\max\!\left\{1,\left\lceil\frac{\log(2NC/\Delta_{\mu_\infty}(N))}{-\log\rho}\right\rceil\right\},\\
 \operatorname{Perf}_{\mu_{k_0}}(N)&\ge U_*+\tfrac12\Delta_{\mu_\infty}(N)>U_*.
 \end{aligned}
 \label{eq:hma-explicit-cap}
\end{equation}
\end{corollary}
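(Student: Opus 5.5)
The plan is to use the finite-candidate machinery directly. Theorem~\ref{thm:app-interruption} reduces the task to showing $\mathcal A_{\mu_{k_0}}(N)\ge-\tfrac12\Delta_{\mu_\infty}(N)$, because then
\[
\operatorname{Perf}_{\mu_{k_0}}(N)-U_*=\Delta_{\mu_\infty}(N)+\mathcal A_{\mu_{k_0}}(N)\ge\tfrac12\Delta_{\mu_\infty}(N)>0.
\]
So everything rests on one lower bound on the expected net interruption gain at the specific cap $k_0$.

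First, I will invoke the expected loss condition of Corollary~\ref{cor:app-candidate-cap} at $k=k_0$, which gives $\mathcal A_{\mu_{k_0}}(N)\ge-\varepsilon_{\mu_{k_0}}\mathbb E_{\mu_{k_0}}|\mathcal J_{\mu_{k_0}}(N)|$. Next, I will bound the count as in the proof of that corollary. Each added interruption consumes a disjoint block of exactly $k_0$ experiments within the budget, so $|\mathcal J_{\mu_{k_0}}(N)|\le\lfloor N/k_0\rfloor\le N$. I deliberately use the crude bound $N$ rather than $N/k_0$: the definition of $k_0$ contains $2NC$ rather than $2NC/k_0$, and the crude bound avoids a circular dependence of $k_0$ on itself. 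Combining these with $\varepsilon_{\mu_{k_0}}\le C\rho^{k_0}$ gives
\[
\mathcal A_{\mu_{k_0}}(N)\ge-NC\rho^{k_0}.
\]

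The remaining step is the calculation on $k_0$, which I expect to be the only real obstacle because of the $\max\{1,\cdot\}$ and the sign of the logarithm. Set $L:=\log(2NC/\Delta_{\mu_\infty}(N))$. Since $-\log\rho>0$, we have $k_0\ge L/(-\log\rho)$, hence $k_0(-\log\rho)\ge L$, i.e.\ $\rho^{k_0}\le e^{-L}=\Delta_{\mu_\infty}(N)/(2NC)$. This holds in both cases:
\begin{itemize}
\item[] if $L>0$, the ceiling term dominates or equals the bound directly;
\item[] if $L\le0$, then $k_0=1$ and $\rho^{1}<1\le e^{-L}$ still holds.
\end{itemize}
Therefore $NC\rho^{k_0}\le\tfrac12\Delta_{\mu_\infty}(N)$. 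The hypotheses $C>0$, $N\ge1$ and $\Delta_{\mu_\infty}(N)>0$ make $L$ well defined.

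Substituting this into the bound on $\mathcal A_{\mu_{k_0}}(N)$ and then into the identity of Theorem~\ref{thm:app-interruption} yields the claimed inequality. Finally, since $k_0$ is a finite integer at least $1$, it serves as the $k_0$ in Corollary~\ref{cor:hma-sufficient-cap}.
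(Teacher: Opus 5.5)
Your proposal is correct and follows essentially the same route as the paper. You bound $|\mathcal J_{\mu_{k_0}}(N)|\le\lfloor N/k_0\rfloor\le N$, so the expected loss is at most $NC\rho^{k_0}\le\tfrac12\Delta_{\mu_\infty}(N)$ by the choice of $k_0$, and you substitute into the identity $\operatorname{Perf}_{\mu_k}(N)-\operatorname{Perf}_{\mu_\infty}(N)=\mathcal A_{\mu_k}(N)$ from the proof of Theorem~\ref{thm:app-interruption}; your case split on the sign of $L$ only spells out a step the paper leaves implicit, and is not strictly needed since $k_0\ge L/(-\log\rho)$ holds in both cases.
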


\begin{proof}[Proof of Corollary~\ref{cor:app-explicit-cap}]\itshape
The expected loss is at most $\lfloor N/k\rfloor C\rho^k\le NC\rho^k$. The choice of $k_0$ makes this at most $\Delta_{\mu_\infty}(N)/2$; substitution in~\eqref{eq:app-interruption-identity} proves the claim. Taking $k=k_0$ for the selected initial agent gives a sufficient cap in $[1,k_0]$, proving Corollary~\ref{cor:hma-sufficient-cap}. Uniform $C,\rho$ and a positive margin lower bound across budgets give $k_0=O(\log N)$.
\end{proof}

\clearpage
\section{Prefix Retention and Empirical Cap Selection}
\label{app:empirical}

We quantify the observed gains retained by short prefixes and provide the simultaneous confidence construction used to motivate candidate caps in Section~\ref{sec:hma-fixed-k}. Executed fixed-$k$ outcomes are analyzed separately in Appendix~\ref{app:ablations}.

\subsection{Retention Statistics and Confidence Bounds}
\label{app:prefix}
\label{app:confidence}

We establish the statistical guarantee supporting Section~\ref{sec:hma-fixed-k}. Definition~\ref{def:empirical-prefix} specifies task-level prefix retention, Assumption~\ref{ass:empirical-tasks} states the sampling condition, and Proposition~\ref{prop:empirical-confidence} gives simultaneous $1-\delta$ lower confidence bounds.

\begin{definition}[Observed prefix gain]
\leavevmode\phantomsection\label{def:empirical-prefix}
For policy $\mu$ and task $j$, define
\[
 \mathcal O_{\mu,j}:=\{i:o_i^{(u_i)}\text{ is recorded on task }j\text{ under }\mu\}.
\]
For each session index $i\in\mathcal O_{\mu,j}$, let $l_i$ be its recorded experiment count and $q_i(d)$ its best quality after $d$ experiments, with entry quality at $d=0$. For integer $k\ge0$, define
\begin{equation}
 g_i(k)=q_i(\min\{k,l_i\})-q_i(0),
 \qquad \zeta_{\mu,j}(k)=
 \frac{\sum_{i\in\mathcal O_{\mu,j}}g_i(k)}{\sum_{i\in\mathcal O_{\mu,j}}g_i(l_i)}.
 \label{eq:app-prefix-retention}
\end{equation}
The ratio is defined for positive total gain; $n_\mu$ counts these tasks.
\end{definition}

\begin{assumption}[Independent task observations]
\leavevmode\phantomsection\label{ass:empirical-tasks}
For each prespecified policy $\mu$, cap $k$, and statistic, let $Y_j\in[0,1]$ be $\zeta_{\mu,j}(k)$ or $\mathbf1_{\{\zeta_{\mu,j}(k)\ge0.9\}}$. Conditional on defined ratios, assume independence across the $n_{\mu}$ tasks. Dependence across configurations and caps is allowed.
\end{assumption}

\begin{proposition}[Simultaneous lower confidence bounds]
\leavevmode\phantomsection\label{prop:empirical-confidence}
Under Assumption~\ref{ass:empirical-tasks}, set $\bar Y=n_{\mu}^{-1}\sum_jY_j$ and $\mu_Y=n_{\mu}^{-1}\sum_j\mathbb E Y_j$. For $M_{\mathrm{cmp}}$ prespecified comparisons and $\delta\in(0,1)$, define $\underline\mu_Y\le\bar Y$ by
\begin{equation}
 n_{\mu}d_{\mathrm{KL}}(\bar Y\|\underline\mu_Y)=\log(M_{\mathrm{cmp}}/\delta),
 \label{eq:app-confidence-bound}
\end{equation}
with $\underline\mu_Y=0$ when $\bar Y=0$. Then $\Pr(\underline\mu_Y\le\mu_Y\text{ for all comparisons})\ge1-\delta$.
\end{proposition}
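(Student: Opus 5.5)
The plan is to reduce the simultaneous statement to $M_{\mathrm{cmp}}$ separate one-sided tail bounds and combine them with a union bound. Fix one prespecified comparison, i.e.\ one policy $\mu$, one cap $k$ and one statistic. Conditional on the set of tasks with defined ratios, Assumption~\ref{ass:empirical-tasks} makes $Y_1,\dots,Y_{n_\mu}$ independent and $[0,1]$-valued, though not identically distributed. We must show $\Pr(\underline\mu_Y>\mu_Y)\le\delta/M_{\mathrm{cmp}}$. A union bound over the $M_{\mathrm{cmp}}$ comparisons then gives the claim. The union bound needs no independence across comparisons, which is why dependence across configurations and caps is allowed.

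For the single-comparison bound I would use the Chernoff--Hoeffding inequality in KL form. For independent $Y_j\in[0,1]$ with average mean $\mu_Y$ and any $y\ge\mu_Y$,
\[
\Pr(\bar Y\ge y)\le\exp\bigl(-n_\mu d_{\mathrm{KL}}(y\|\mu_Y)\bigr).
\]
The proof has three steps. First, the MGF of each $Y_j$ is dominated by that of a Bernoulli variable with the same mean, by convexity of $e^{\lambda y}$ on $[0,1]$. Second, the product over $j$ is dominated via AM--GM: $\prod_j(1-p_j+p_je^\lambda)\le(1-\mu_Y+\mu_Ye^\lambda)^{n_\mu}$. Third, optimizing over $\lambda\ge0$ yields the Bernoulli KL exponent.

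The main obstacle is translating the data-dependent event $\{\underline\mu_Y>\mu_Y\}$ into a deterministic tail event. The function $p\mapsto d_{\mathrm{KL}}(\bar Y\|p)$ is continuous on $(0,\bar Y]$ and strictly decreasing there, vanishes at $p=\bar Y$, and tends to $-\log(1-\bar Y)$ as $p\downarrow0$. So whenever the root exists in $(0,\bar Y]$ it is unique. If the limiting value is too small for a root to exist, I set $\underline\mu_Y=0$, and such cases are trivially valid.

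Now suppose $\underline\mu_Y>\mu_Y$. Then $\bar Y\ge\underline\mu_Y>\mu_Y$, and monotonicity gives $d_{\mathrm{KL}}(\bar Y\|\mu_Y)>\log(M_{\mathrm{cmp}}/\delta)/n_\mu$. Next define the deterministic threshold $y^*>\mu_Y$ as the point with $n_\mu d_{\mathrm{KL}}(y^*\|\mu_Y)=\log(M_{\mathrm{cmp}}/\delta)$. This is well defined because $y\mapsto d_{\mathrm{KL}}(y\|\mu_Y)$ is increasing on $[\mu_Y,1]$. If no such $y^*\le1$ exists, the event is empty. Otherwise the event implies $\bar Y>y^*$, and the tail bound gives probability at most $\delta/M_{\mathrm{cmp}}$.

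The edge cases $\bar Y=0$, $\mu_Y=0$ and $\mu_Y=1$ are checked directly. For the indicator statistic $\mathbf1_{\{\zeta\ge0.9\}}$ the same argument applies verbatim. I would also note that $\zeta_{\mu,j}(k)\in[0,1]$ follows from monotone best-quality prefixes (Assumption~\ref{ass:hma-retention}), which justifies $Y_j\in[0,1]$.
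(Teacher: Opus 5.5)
Your proposal is correct and follows the paper's proof. The paper also uses the KL-form Chernoff bound $\Pr(\bar Y\ge z)\le e^{-n_\mu d_{\mathrm{KL}}(z\|\mu_Y)}$, built from Bernoulli MGF domination (the cited exponential-moment lemma), AM--GM (the paper's Jensen step) and Chernoff optimization, followed by inversion at level $\delta/M_{\mathrm{cmp}}$ and a union bound over comparisons. One minor slip: for $\bar Y>0$ you have $d_{\mathrm{KL}}(\bar Y\|p)\to\infty$ as $p\downarrow0$, since $-\log(1-\bar Y)$ is the limit of $d_{\mathrm{KL}}(p\|\bar Y)$ with the arguments swapped. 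So the root always exists and your ``no root'' fallback is vacuous, leaving $\bar Y=0$ as the only degenerate case, which the statement already covers.
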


\begin{proof}[Proof of Proposition~\ref{prop:empirical-confidence}]\itshape
The exponential-moment bound~\citep[Lemma~9]{garivier2011klucb}, Jensen's inequality, and Chernoff optimization give $\Pr(\bar Y\ge z)\le e^{-n_{\mu}d_{\mathrm{KL}}(z\|\mu_Y)}$ for $\mu_Y<z\le1$. Inverting at failure level $\delta/M_{\mathrm{cmp}}$ gives~\eqref{eq:app-confidence-bound}; a union bound over the comparisons proves the claim.
\end{proof}

\subsection{Configurations and Observed Prefix Results}
\label{app:mle-prefix}
This subsection documents the MLE-bench configurations and results supporting empirical cap selection in Section~\ref{sec:hma-fixed-k} and Figure~\ref{fig:hma-prefix-retention}.

Table~\ref{tab:hma-prefix-flows} lists six goal and five \nta{} configurations, all using \texttt{max} reasoning effort. A goal task run is one continuous session; \nta{} alternates at natural termination, starting with the first listed model. Harness documentation is provided by \citet{bolin2026codexloop,anthropicClaudeCodeWorks,moonshot2026kimicode,deepseek-harness2026}.

\begin{table}[!ht]
\centering
\caption{Model and harness composition of the eleven MLE-bench prefix-analysis configurations. \nta{} arrows specify the starting order of the repeating pair.}
\label{tab:hma-prefix-flows}
\begingroup
\small
\setlength{\tabcolsep}{4pt}
\renewcommand{\arraystretch}{1.12}
\begin{tabular}{@{}p{.105\linewidth}p{.50\linewidth}p{.355\linewidth}@{}}
\hline
\textbf{Family} & \textbf{Model(s), in starting order} & \textbf{Harness(es), same order} \\
\hline
Goal & GPT-5.6-sol & Codex \\
Goal & Kimi K3 & Kimi Code \\
Goal & GLM-5.3 & Claude Code \\
Goal & DeepSeek-V4-Flash & DeepSeek Harness \\
Goal & DeepSeek-V4.1-Flash & DeepSeek Harness \\
Goal & Claude Opus 5 & Claude Code \\
\hline
\nta{} & GPT-5.6-sol $\to$ Kimi K3 & Codex $\to$ Kimi Code \\
\nta{} & Kimi K3 $\to$ GPT-5.6-sol & Kimi Code $\to$ Codex \\
\nta{} & GLM-5.3 $\to$ DeepSeek-V4-Flash & Claude Code $\to$ DeepSeek Harness \\
\nta{} & DeepSeek-V4.1-Flash $\to$ GLM-5.3 & DeepSeek Harness $\to$ Claude Code \\
\nta{} & Claude Opus 5 $\to$ GPT-5.6-sol & Claude Code $\to$ Codex \\
\hline
\end{tabular}
\endgroup
\end{table}

For MLE-bench, we use $q=1-(\mathrm{position}-1)/(\mathrm{leaderboard\ size})$ on 75 tasks, including 75 measured Opus goal runs. Each goal task run is one option; \nta{} follows recorded boundaries, giving 1,065 options.

At each $k$, Figure~\ref{fig:hma-prefix-retention} plots the mean of $\min\{k,l_i\}/l_i$ across options horizontally and gain retention averaged across tasks within each configuration, then equally across configurations vertically, both as percentages. Individual curves use each configuration's data, starting at $(0,0)$ and ending at $(100,100)$ at its maximum option length.

At $k=1,3,5$, mean gain retention is 63.15\%, 93.35\%, and 97.83\%, with experiment fractions 24.76\%, 54.88\%, and 73.77\%. At $k=5$, individual configurations retain 96.11\%--99.20\% (Opus: 98.74\%), motivating candidates $1\le k\le5$.

For eleven configurations, fifty caps ($k=1,\ldots,50$), and two statistics, $M_{\mathrm{cmp}}=1100$. At $\delta=0.05$ and $k=5$, 87.84\%--98.67\% of tasks retain at least 90\% of gains; Appendix~\ref{app:confidence} gives simultaneous lower bounds of 65.08\%--83.30\%.

\clearpage
\section{Experimental Setup and Reproducibility}
\label{app:experiment-details}

We collect the execution environment, agent configurations, baseline settings, evaluation rules, and repeat protocol used throughout the experiments. Empirical findings are separated into Appendices~\ref{app:motivation-evidence}--\ref{app:case-studies}.

\subsection{Humanize and Execution Environment}
\label{app:humanfia-environment}

\paragraph{Terminology.}
A configuration specifies the agent or agent pair, starting order, cap, and budget. A task run is one complete execution on one task; a repeat evaluates the full task suite. Each agent session is modeled as an option, with experiments counted within it; a model response is one recorded output from a model call.

\paragraph{Framework.}
We implement the workflow on Humanize~\citep{liu2026humanizejudgementengineeringagentic}, which coordinates agent execution in a shared workspace, fixed-$k$ interruption, and final review as specified in Algorithm~\ref{alg:hma}. The model harness supplies the internal agent policy; the workflow controls option boundaries and preserves shared task workspace. \method{} uses $k=5$ and $t_{\mathrm{total}}=21600$ seconds per task, including $t_{\mathrm{rev}}=900$ seconds reserved for the other agent's final review. Thus exploration ends at $t_{\mathrm{s}}=20700$ seconds.

\paragraph{Hardware.}
Our execution environment exposes 30 vCPUs backed by an Intel Xeon Platinum 8358 processor, 220 GiB RAM, and one NVIDIA A10 GPU. The GPU reports 24 GiB of available device memory. These resources are shared by the two alternating agents within a single task run.

\subsection{Agent and Baseline Configurations}
\label{app:baseline-configurations}

\paragraph{Agents.}
Table~\ref{tab:hma-agent-configurations} lists the full model names and native harnesses. In analyses and figures, GPT, Kimi, GLM, and Opus abbreviate GPT-5.6-sol, Kimi K3, GLM-5.3, and Claude Opus 5; DeepSeek-V4 and DeepSeek-V4.1 abbreviate the corresponding Flash models. DeepSeek alone denotes the version specified for the configuration. All agents in our runs use \texttt{max} reasoning effort. Codex, Claude Code, Kimi Code, and DeepSeek Harness provide the model execution interfaces~\citep{deepseek-harness2026,bolin2026codexloop,anthropicClaudeCodeWorks,moonshot2026kimicode}. Appendix~\ref{app:mle-prefix} separately lists the goal and \nta{} configurations used for prefix analysis.

\begin{table}[htbp]
\centering
\caption{Model--harness mappings and reasoning effort for our runs.}
\label{tab:hma-agent-configurations}
\begingroup
\small
\setlength{\tabcolsep}{8pt}
\begin{tabular}{@{}lll@{}}
\hline
\textbf{Model} & \textbf{Harness} & \textbf{Reasoning effort} \\
\hline
GPT-5.6-sol & Codex & \texttt{max} \\
Kimi K3 & Kimi Code & \texttt{max} \\
GLM-5.3 & Claude Code & \texttt{max} \\
DeepSeek-V4-Flash & DeepSeek Harness & \texttt{max} \\
DeepSeek-V4.1-Flash & DeepSeek Harness & \texttt{max} \\
Claude Opus 5 & Claude Code & \texttt{max} \\
\hline
\end{tabular}
\endgroup
\end{table}

\paragraph{Comparison baselines.}
We include ScienceFlow, MLEvolve, and ML-Master 2.0 using their reported results~\citep{zhao2026scienceflow,du2026mlevolve,zhu2026mlmaster2}. Table~\ref{tab:hma-baseline-configurations} records their original configurations; the \texttt{max} reasoning setting applies to our runs, while these published results retain their reported inference settings.

\begin{table}[htbp]
\centering
\caption{Configurations of the three comparison baselines, as reported in their original papers. Resource allocations and time budgets differ across methods.}
\label{tab:hma-baseline-configurations}
\begingroup
\footnotesize
\setlength{\tabcolsep}{4pt}
\renewcommand{\arraystretch}{1.15}
\begin{tabular}{@{}p{.14\linewidth}p{.235\linewidth}p{.28\linewidth}p{.265\linewidth}@{}}
\hline
\textbf{Baseline} & \textbf{Model and time budget} & \textbf{Execution resources} & \textbf{Key configuration} \\
\hline
ScienceFlow & DeepSeek-V4-Flash-Preview; 24 h & 16--32 logical CPU cores, 256 GB RAM, at most 2 GPUs & Recoverable workspaces, ESTRA re-anchoring, and resource-aware execution control. \\
MLEvolve & Gemini-3.1-Pro-preview; 12 h & 21 vCPUs, 234 GB RAM, 1 NVIDIA H200 & Temperature 1.0; 500 expansion steps; 3 parallel branches and 3 initial drafts. \\
ML-Master 2.0 & DeepSeek-V3.2-Speciale; DeepSeek-V3.2 (thinking) for context promotion; 24 h & 36 AMD EPYC vCPUs and 2 RTX 4090 GPUs per agent; 1008 GB RAM and 1 TB SSD shared across 4 tasks & Hierarchical cognitive caching; prior wisdom initialized from 407 Kaggle competitions excluding MLE-bench. \\
\hline
\end{tabular}
\endgroup
\end{table}

\subsection{Evaluation Protocol and Repeats}
\label{app:experiment-evaluation}

\paragraph{Evaluation metrics.}
Lite, Medium, High, and All denote the 22-, 38-, 15-, and 75-task sets, respectively. Lite, Medium, and High report medal percentages within each complexity group; All reports the medal percentage over the full 75-task suite. For repeated evaluations, rates are calculated from per-repeat medal counts before averaging. Table~\ref{tab:hma-main-results} reports all performance metrics to one decimal place. Within a model pair, \textit{Avg. of Single} averages the two constituent goal means with equal weight; \method{} gains are computed against this mean before rounding. Gold is the percentage of tasks attaining a gold medal. Med+ is the percentage outperforming the human leaderboard median. Mean percentile averages the task-level leaderboard percentile over the evaluated suite, using the rank convention in Appendix~\ref{app:mle-prefix}. All metrics use the workflow's returned candidate; for \method{} this is the final review selection or its fallback in Algorithm~\ref{alg:hma}. Test-set scores are computed offline.

\paragraph{Repeated evaluations.}Each configuration is evaluated in three independent repeats over the fixed 75-task suite. Reported values average the three repeats; published baseline uncertainties retain their original definitions.

\clearpage
\section{Additional Evidence for the Motivation}
\label{app:motivation-evidence}

We provide the detailed evidence behind the two motivation claims in Section~\ref{sec:hma-motivation}: response-length decay in uninterrupted runs and harness staleness across model versions.

\subsection{Declining Response Length in Single-Agent Runs}
\label{app:goal-token-growth}

\paragraph{Configurations.}
We document Figure~\ref{fig:hma-token-growth} using the six configurations in Table~\ref{tab:hma-agent-configurations}. The \texttt{/goal} protocol runs one agent until natural termination or exhaustion of six hours of active runtime, without model alternation or fixed-cap interruption. Each of the six configurations is evaluated on 75 tasks with three independent repeats and \texttt{max} reasoning effort. Recorded cohorts and prompt policies are retained.

\paragraph{Response-level moving average.}
Figure~\ref{fig:hma-token-growth} reports mean output tokens per recorded model response. At one-minute active-runtime checkpoints, we divide the sum of provider-reported output tokens by the number of responses in the preceding 30 minutes, pooling the 75 tasks and repeats within each configuration. Near the origin, the window uses the elapsed prefix; windows without responses are undefined. This response-weighted statistic excludes tool-return text, input tokens, and coordinator output, while retaining model responses that request tools.

\paragraph{Early versus late responses.}
Table~\ref{tab:hma-response-token-change} compares responses in hours 0--2 and 3--5 using raw event totals divided by response counts in each interval. All six configurations show a reduction of 29.8\%--45.0\%; these interval statistics are computed before temporal smoothing.

\begin{table}[!htbp]
\centering
\caption{\textbf{Mean output tokens per recorded response.} Change is relative to hours 0--2. The same task suite and repeats are used as in Figure~\ref{fig:hma-token-growth}.}
\label{tab:hma-response-token-change}
\small
\setlength{\tabcolsep}{8pt}
\begin{tabular}{l r r r}
\toprule
Model & \makecell{Hours 0--2\\tokens/response} & \makecell{Hours 3--5\\tokens/response} & Change (\%) \\
\midrule
GPT-5.6-sol & 336.5 & 213.6 & $-36.5$ \\
Claude Opus 5 & 1008.3 & 554.8 & $-45.0$ \\
GLM-5.3 & 658.4 & 388.5 & $-41.0$ \\
DeepSeek-V4.1-Flash & 1137.5 & 696.1 & $-38.8$ \\
Kimi K3 & 585.7 & 390.2 & $-33.4$ \\
DeepSeek-V4-Flash & 682.2 & 479.2 & $-29.8$ \\
\bottomrule
\end{tabular}
\end{table}

\subsection{Harness Staleness across Model Versions}
\label{app:harness-staleness-results}

\paragraph{Comparison setup.}
We compare ML-Master 2.0~\citep{zhu2026mlmaster2}, MLEvolve~\citep{du2026mlevolve}, and ScienceFlow~\citep{zhao2026scienceflow} with the official DeepSeek Harness on a 16-task MLE-bench subset~\citep{chan2024mlebench}, using DeepSeek-V4-Flash and DeepSeek-V4.1-Flash. We retain the research harnesses' original workflow designs and experimental settings while disabling task-specific cold-start knowledge wherever present. Execution budgets are reported in Table~\ref{tab:hma-harness-staleness}. DeepSeek Harness runs each model under \texttt{/goal} with \texttt{max} reasoning effort for six hours; its reported medal rate is averaged over three repeats.

\paragraph{Result accounting.}
Medal rates use all 16 tasks as the denominator. The disqualified ScienceFlow run on Chaii with DeepSeek-V4.1-Flash is counted as no medal because its checkpoint had seen the test answers. Table~\ref{tab:hma-harness-staleness} reports medal rates and percentage-point differences from the corresponding DeepSeek Harness baseline. Table~\ref{tab:hma-harness-per-task} provides the task-level outcomes. These 16-task DeepSeek Harness references are distinct from the full 75-task results in Table~\ref{tab:hma-main-results}.

\begin{table}[!ht]
\centering
\caption{\textbf{Task-level outcomes for the harness comparison.} DeepSeek Harness entries report the number of runs earning a medal out of three; its medal total is the mean count. The last row divides each medal total by 16, matching Table~\ref{tab:hma-harness-staleness}. Other columns report medal classes from the retained run: G, S, and B denote gold, silver, and bronze; \textemdash\ denotes no medal, $\times$ denotes the disqualified ScienceFlow V4.1 Chaii run. V4 and V4.1 denote DeepSeek-V4-Flash and DeepSeek-V4.1-Flash. When a run did not nominate a final artifact, the recorded score-independent rule selects its latest eligible submission.}
\label{tab:hma-harness-per-task}
\footnotesize
\setlength{\tabcolsep}{4pt}
\renewcommand{\arraystretch}{0.96}
\resizebox{\linewidth}{!}{%
\begin{tabular}{@{}l|cc|cc|cc|cc@{}}
\toprule
\textbf{Workflow}
& \multicolumn{2}{c|}{\textbf{DeepSeek Harness}}
& \multicolumn{2}{c|}{\textbf{ScienceFlow}}
& \multicolumn{2}{c|}{\textbf{ML-Master 2.0}}
& \multicolumn{2}{c}{\textbf{MLEvolve}} \\
\cmidrule(lr){2-3}\cmidrule(lr){4-5}\cmidrule(lr){6-7}\cmidrule(lr){8-9}
\textbf{Task} & V4 & V4.1 & V4 & V4.1 & V4 & V4.1 & V4 & V4.1 \\
\midrule
3D Object Detection & 1/3 & 2/3 & S & \textbf{G} & \textemdash & S & \textemdash & \textemdash \\
APTOS Blindness Detection & 0/3 & 3/3 & S & B & \textemdash & \textbf{G} & \textemdash & S \\
Toxic Comment Classification & 0/3 & 3/3 & \textemdash & \textbf{G} & \textemdash & \textemdash & B & S \\
Leaf Classification & 1/3 & 3/3 & S & S & \textemdash & \textemdash & B & \textbf{G} \\
Billion Word Imputation & 0/3 & 3/3 & \textemdash & \textbf{G} & \textemdash & \textemdash & \textemdash & \textemdash \\
Cdiscount Image Classification & 0/3 & 1/3 & \textemdash & \textemdash & \textemdash & \textemdash & \textemdash & \textemdash \\
Chaii Question Answering & 0/3 & 1/3 & \textemdash & $\times$ & \textemdash & \textemdash & \textemdash & \textemdash \\
Freesound Audio Tagging & 1/3 & 1/3 & \textemdash & \textemdash & \textemdash & \textemdash & \textemdash & S \\
iMet & 0/3 & 1/3 & \textemdash & S & \textemdash & \textemdash & \textemdash & \textemdash \\
iWildCam & 2/3 & 3/3 & \textemdash & \textbf{G} & \textemdash & S & \textbf{G} & S \\
Kuzushiji Recognition & 2/3 & 3/3 & B & \textemdash & \textemdash & \textbf{G} & \textemdash & \textemdash \\
Learning Agency Essay Scoring & 0/3 & 3/3 & \textemdash & \textbf{G} & \textemdash & \textbf{G} & \textemdash & \textemdash \\
LMSYS Chatbot Arena & 0/3 & 2/3 & \textemdash & S & \textemdash & \textemdash & \textemdash & \textemdash \\
PetFinder Pawpularity & 0/3 & 3/3 & \textemdash & \textbf{G} & \textemdash & S & \textemdash & S \\
Tweet Sentiment Extraction & 0/3 & 2/3 & \textemdash & S & \textemdash & \textemdash & \textemdash & S \\
US Patent Phrase Matching & 1/3 & 3/3 & \textemdash & \textemdash & S & \textemdash & \textemdash & S \\
\midrule
\rowcolor{gray!12}
\textbf{Medals} & \textbf{2.7} & \textbf{12.3} & \textbf{4} & \textbf{11} & \textbf{1} & \textbf{6} & \textbf{3} & \textbf{8} \\
\textbf{Medal rate (\%)} & 16.7 & 77.1 & 25.0 & 68.8 & 6.3 & 37.5 & 18.8 & 50.0 \\
\bottomrule
\end{tabular}%
}
\end{table}

\clearpage
\section{Additional Main-Comparison Analyses}
\label{app:main-curves}

We supplement the main comparison with the aggregation rules, response and performance trajectories, matched constituent weighting, and task-level comparisons used for Figure~\ref{fig:hma-all-growth}.

\paragraph{Analysis configurations.}
We analyze the five participating goal models and six \method{} configurations in Table~\ref{tab:hma-main-results}, using the protocol in Appendix~\ref{app:experiment-details}. The trajectory aggregation and matched constituent construction follow the definitions below. The separate scheduling ablations appear in Appendix~\ref{app:ablations}.

\paragraph{Performance and response trajectories.}
At each active-runtime checkpoint, we aggregate the medal status of the latest accepted candidate on each task, assigning zero before a valid submission. Final points use the returned candidate under the recorded selection rule. Scores use the corrected answer keys. Receipt processing at the nominal deadline is included in the final point. Figure~\ref{fig:hma-all-growth} (a) uses the response-level statistic in Appendix~\ref{app:goal-token-growth}: at each one-minute checkpoint, total actor output tokens in the preceding 30-minute trailing window are divided by the number of recorded responses, pooling tasks and repeats within each configuration and including \method{}'s final review. Near the origin, the window uses the elapsed prefix; empty windows are undefined. Figure~\ref{fig:hma-all-growth} (b) retains the unsmoothed medal-rate trajectory. Token counts measure output volume across different tokenizers, not a common monetary or computational cost. Neither panel extrapolates trajectories or uses best-so-far score selection.

\paragraph{Matched group-average curves.}
Figure~\ref{fig:hma-all-growth} assigns weight $1/6$ to each \method{} configuration. For configuration $j$ with constituent models $A_j,B_j$, its goal reference is $(Y_{A_j}^{\mathrm{goal}}(t)+Y_{B_j}^{\mathrm{goal}}(t))/2$, where $Y$ denotes mean output tokens per response or medal rate, computed within each configuration as above. We average these six references at each common runtime checkpoint. The resulting matched model weights are reported in Table~\ref{tab:hma-curve-weights}; the same weights are used for Goal and HMA, so each model contributes equally to both references. DeepSeek-V4-Flash is excluded because it appears in none of the six \method{} configurations. This matches the mean-of-constituents reference used for the gains in Table~\ref{tab:hma-main-results}; the weights describe configuration composition rather than measured execution-time shares. Both panels use active runtime horizontally, with mean output tokens per response in (a) computed over a 30-minute trailing window and mean medal rate in (b). Faint curves retain the participating configurations. For the early/late comparison, we divide raw output-token totals by response counts separately in hours 0--2 and 3--5 within each configuration, then apply the same group weights. The resulting means are 610.9 and 527.6 tokens per response for \method{}, and 650.3 and 394.9 for goal, retaining 86.4\% and 60.7\%, respectively; these interval statistics are computed before temporal smoothing.

\begin{table*}[h] \centering \footnotesize \begin{minipage}[t]{0.46\textwidth} \centering \captionof{table}{\textbf{Matched workflow weights.}  The matched weight is used for both Goal and HMA, so every model contributes equally to both references.} \label{tab:hma-curve-weights} \begin{tabular}{lc} \toprule Model & Matched weight \\ \midrule GPT & $5/12$ \\ Opus & $1/6$ \\ DeepSeek-V4.1 & $1/6$ \\ Kimi & $1/6$ \\ GLM & $1/12$ \\ \bottomrule \end{tabular} \end{minipage}\hfill \begin{minipage}[t]{0.46\textwidth} \centering \captionof{table}{\textbf{Early and final medals.} Any-medal rates are reported at two hours and at the six-hour endpoint for each HMA order.} \label{tab:hma-early-medals} \begin{tabular}{lcc} \toprule Order & 2h & 6h \\ \midrule Opus/GPT & 57.3 & 78.2 \\ GPT/Opus & 58.7 & 73.3 \\ GPT/DeepSeek-V4.1 & 54.7 & 72.0 \\ GPT/Kimi & 60.0 & 70.7 \\ GPT/GLM & 50.7 & 69.3 \\ DeepSeek-V4.1/Kimi & 42.7 & 62.7 \\ \bottomrule \end{tabular} \end{minipage} \end{table*}\paragraph{Early progress and task-level gains.}
Table~\ref{tab:hma-early-medals} reports the any-medal rates at two hours and at the six-hour endpoint. The six \method{} configurations attain 57.3\%, 58.7\%, 54.7\%, 60.0\%, 50.7\%, and 42.7\% medals at two hours, with corresponding final rates of 78.2\%, 73.3\%, 72.0\%, 70.7\%, 69.3\%, and 62.7\%. Across matched GPT goal comparisons, GPT--Kimi \method{} gains medals on 5, 4, and 6 tasks and loses them on 4, 3, and 2, yielding net gains of 1, 1, and 4. These successful tasks remain within the evaluated goal-task union, characterizing gains achieved within the six-hour budget.

\clearpage
\section{Ablation Studies}
\label{app:ablations}

We supplement Section~\ref{sec:hma-ablation} with starting-order effects, executed fixed-$k$ comparisons, natural-termination statistics, and a selection-only final review audit. We discuss order before cap size because model exposure within a finite time budget depends on both.

\subsection{Starting-Order Effects}
\label{app:starting-order}
\label{app:main-trajectories}

We compare GPT-first and Kimi-first alternation under \nta{} and \method{} with $k=5$, then relate the endpoint differences to model-session time and partner participation.

\paragraph{Configurations.}
Each configuration covers the same 75 tasks with a six-hour budget and \texttt{max} reasoning effort. The additional Kimi-first \method{} run records 283 exploration options: among the 75 initial options, 52 reach the cap, 22 end naturally, and one reaches the deadline. The GPT-first $k=5$ run records 277 exploration options, with 66 initial cap closures and nine natural exits. Corrected answer keys give 42/75 medals for Kimi-first \nta{}, including three corrected outcomes. For the Kimi-first \method{} archive, the terminal window is 21,700 seconds; all accepted candidates arrive within 21,600 seconds and its six-hour medal count matches the returned result.

\paragraph{Final quality and finite-budget exposure.}
Figure~\ref{fig:hma-main-trajectories}(a) compares both GPT--Kimi starting orders under the six-hour budget. \nta{} reaches 77.3\% and 56.0\% medals with GPT and Kimi first, respectively; \method{} with $k=5$ reaches 70.7\% and 66.7\%, narrowing the observed gap from 21.3 to 4.0 percentage points. The six-hour horizon limits sustained participation by both agents under \nta{}: by 5.75 hours, the second agent has produced a recorded response on only 19/75 GPT-first tasks and 47/75 Kimi-first tasks. Under \nta{}, the nominal first model occupies about 81--83\% of the budget. Because the GPT goal baseline is stronger than the Kimi baseline (68.0\% versus 54.2\%), the longer uninterrupted GPT exposure in the GPT-first archive is associated with the higher \nta{} result. GPT-first \method{} instead allocates 51.1\% to GPT and 46.3\% to Kimi, whereas Kimi-first \method{} still allocates 81.3\% to Kimi and only 16.2\% to GPT. These finite-budget differences accord with the startup effects in Appendix~\ref{app:advantage}; improvement from interruption remains conditional on continuation advantage (Theorem~\ref{thm:hma-interruption}).

\begin{figure}[h]
\centering
\includegraphics[width=.88\linewidth]{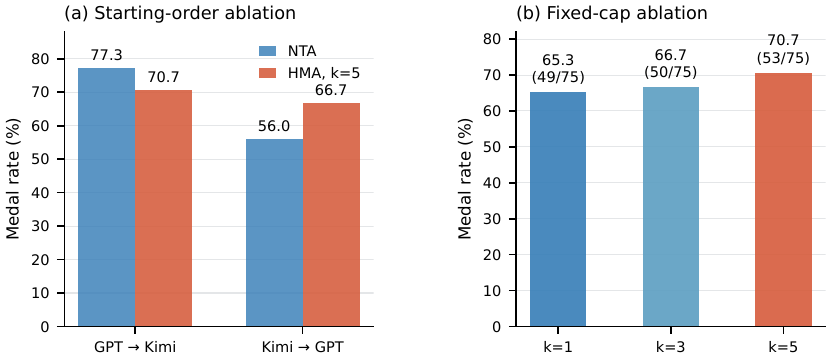}
\caption{\textbf{Fixed-cap interruption: starting-order and cap ablations.} Both panels use the same 75-task, six-hour GPT-5.6-sol--Kimi K3 suite. (a) Starting-order comparison: \nta{} and \method{} with $k=5$; the cap narrows the observed order gap. (b) GPT-first \method{} with $k\in\{1,3,5\}$; the new A10 runs produce 49/75 and 50/75 medals at $k=1$ and $k=3$, while the existing $k=5$ run produces 53/75. Values are final medal rates, with counts shown in the cap panel.}
\label{fig:hma-main-trajectories}
\end{figure}

\paragraph{Model time allocation.}
We reconstruct occupied model-session intervals from native session and controller logs over the first six hours. Native creation and completion records determine session endpoints. For an interrupted session without a completion record, we use the next session or option start, or the active attempt's end, whichever occurs first. Intervals are intersected with active-runtime windows; pause gaps and auxiliary subagent sessions are excluded, and final review is attributed to its executing model. If $w_j^{(u)}$ is the resulting model-$u$ duration on task $j$, its mean budget share is $100\sum_{j=1}^{75}w_j^{(u)}/(75\cdot21600)$. All returns to a model are accumulated. Unallocated time averages 0.07\% and 0.15\% for GPT-first and Kimi-first \nta{}, and 2.63\% and 2.49\% for the corresponding \method{} runs. These shares include tool execution and waiting within sessions.

\paragraph{Recorded starting-order exceptions.}
Workflow labels retain the archived nominal configurations. In the GPT-first \nta{} archive, seven of 75 tasks have Kimi as the earliest recorded native session; the other \nta{} configuration and both \method{} runs match their nominal initial models on all 75 tasks. Runtime attribution follows the recorded model, and all tasks remain included. Thus the historical \nta{} groups are configuration comparisons rather than perfectly controlled starting-order interventions.

\paragraph{Designated-partner participation.}
We measure participation from the first recorded response by the nominal second agent (Kimi in the GPT-first configuration and GPT in the Kimi-first configuration), excluding \texttt{turn-review} sessions. This measures observable access to search, rather than an exact controller launch time. Tasks with no such response are right-censored at six hours; counts retain all 75 tasks in the denominator. Table~\ref{tab:hma-handoff-timing} reports these response-based timing statistics separately from the session-duration estimates above. With GPT first, the second agent's mean within-task share of output tokens before hour 5.75 rises from 13.0\% under \nta{} to 36.1\% under \method{}. A cap limits experiments per option; it does not impose a fixed wall-clock duration on an experiment.

\begin{table}[ht]
\centering
\caption{\textbf{The cap advances access to the second agent.} All counts are out of 75 tasks and exclude final review. By the six-hour deadline, 22 GPT-first \nta{} tasks and 18 Kimi-first \nta{} tasks still have no recorded response from the second agent.}
\label{tab:hma-handoff-timing}
\small
\setlength{\tabcolsep}{5pt}
\begin{tabular}{l r r r r r}
\toprule
& \multicolumn{3}{c}{Tasks with a second-agent response}
& \makecell{Median first\\response (h)} & \makecell{Final medals\\(\%)} \\
\cmidrule(lr){2-4}
Workflow & By 3 h & By 5 h & By 5.75 h & & \\
\midrule
GPT $\to$ Kimi \nta{} & 17 & 18 & 19 & 5.97 & 77.3 \\
Kimi $\to$ GPT \nta{} & 15 & 27 & 47 & 5.56 & 56.0 \\
GPT $\to$ Kimi \method{} & 55 & 69 & 75 & 1.89 & 70.7 \\
\bottomrule
\end{tabular}
\end{table}

\subsection{Effects of the Fixed Interruption Cap}
\label{app:fixed-cap-ablation}

We hold GPT as the initial agent and compare executed \method{} policies with $k\in\{1,3,5\}$, complementing the observed-prefix analysis in Appendix~\ref{app:empirical}.

\paragraph{Configurations and results.}
Each cap contributes one repeat over 75 tasks with the same model pair, \texttt{max} reasoning effort, and nominal six-hour budget including final review. The $k=1$ and $k=3$ archives use a 21,700-second terminal window. Table~\ref{tab:hma-fixed-cap-ablation} reports the final medal counts and rates, also plotted in Figure~\ref{fig:hma-main-trajectories}(b). These cap configurations and the reverse-order configuration are used for scheduling ablations; the main comparison retains its six configurations.

\begin{table}[!htbp]
\centering
\caption{\textbf{$k=5$ gives the highest medal rate among the tested finite caps.} GPT-first \method{} alternates GPT-5.6-sol and Kimi K3 on the same 75 tasks under the nominal six-hour budget.}
\label{tab:hma-fixed-cap-ablation}
\small
\setlength{\tabcolsep}{12pt}
\begin{tabular}{c r r}
\toprule
\textbf{Cap $k$} & \textbf{Medals / 75} & \textbf{Medal rate (\%)} \\
\midrule
1 & 49 & 65.3 \\
3 & 50 & 66.7 \\
5 & \textbf{53} & \textbf{70.7} \\
\bottomrule
\end{tabular}
\end{table}

\paragraph{Empirical selection within the candidate range.}
The prefix statistics in Appendix~\ref{app:empirical} motivate candidates $1\le k\le5$. Among the executed GPT-first runs, $k=1,3,5$ yield 65.3\%, 66.7\%, and 70.7\% medals, respectively, against the GPT goal mean of 68.0\%. These results support selecting $k=5$ within the tested finite caps for this order and horizon. Theorem~\ref{thm:hma-interruption} gives the corresponding conditional advantage criterion; the prefix statistics guide candidate selection without directly estimating its continuation values.

\paragraph{Interaction with time and starting order.}
A larger cap permits longer uninterrupted experiment blocks before handoff, while the six-hour deadline limits the partner's subsequent search. Given GPT's stronger standalone performance, the GPT-first cap trend and higher GPT-first \nta{} result are consistent with a benefit from preserving more continuous GPT search. Conversely, $k=5$ improves the Kimi-first result and reduces sensitivity to the initial agent. The limited horizon can therefore leave the pair's complementary capabilities incompletely exploited, making cap size and initial-agent choice jointly relevant. The recorded starting-order exceptions in Appendix~\ref{app:starting-order} delimit this interpretation.

\subsection{Natural Termination before the Cap}
\label{app:option-termination}
\label{app:ablation-audit}

We report option-closure outcomes from terminal controller receipts and evaluator ledgers for the six main-table \method{} configurations. One complete repeat per configuration, using the first Opus-first GPT repeat, yields 450 task runs with the same answer keys and final-candidate rule as Table~\ref{tab:hma-main-results}.

\paragraph{Option closure.}
Table~\ref{tab:hma-option-closures} separates the first option from later options and distinguishes reaching five accepted experiments, natural termination before five, and closure at the exploration deadline. Final review sessions are excluded. Initial options contain 57 natural exits among 446 closures before the deadline (12.8\%); later options contain 598 among 992 (60.3\%). Across all options, 655/1,438 closures before the deadline are natural (45.5\%). Of the 655 natural exits, 248 produce no accepted experiment. These are retained in the audit. One GPT--DeepSeek deadline receipt omits the cap field following a controller timeout; its accepted count is zero, and the recorded standing submission agrees with the adopted fallback. All 1,888 recorded option counts are at most five.

\begin{table}[!htbp]
\centering
\caption{\textbf{Observed option-closure reasons at $k=5$.} Each configuration has 75 initial options. Natural-exit percentages exclude deadline closures because the latter censor whether an option would reach the cap or terminate naturally. The first listed model starts each workflow.}
\label{tab:hma-option-closures}
\small
\setlength{\tabcolsep}{4.0pt}
\begin{tabular}{l r r r r r r}
\toprule
& \multicolumn{3}{c}{Initial option} & \multicolumn{3}{c}{Later options} \\
\cmidrule(lr){2-4}\cmidrule(lr){5-7}
Configuration & Cap & Natural & Deadline & Cap & Natural & Deadline \\
\midrule
Opus $\to$ GPT & 61 & 13 & 1 & 56 & 121 & 74 \\
GPT $\to$ Opus & 72 & 2 & 1 & 89 & 69 & 74 \\
GPT $\to$ DeepSeek-V4.1 & 71 & 4 & 0 & 88 & 132 & 75 \\
GPT $\to$ Kimi & 66 & 9 & 0 & 56 & 71 & 75 \\
GPT $\to$ GLM & 54 & 19 & 2 & 71 & 36 & 73 \\
DeepSeek-V4.1 $\to$ Kimi & 65 & 10 & 0 & 34 & 169 & 75 \\
\midrule
\textbf{All configurations} & 389 & 57 & 4 & 394 & 598 & 446 \\
\bottomrule
\end{tabular}
\end{table}

\subsection{Final Review and Historical Any-Medal Retention}
\label{app:review}

We evaluate retention of historically attained any-medal outcomes on the same 450 task runs as Appendix~\ref{app:option-termination}, supplementing the mechanism in Section~\ref{sec:hma-system}.

\paragraph{Selection-only comparison.}
We hold each recorded exploration trajectory fixed and compare the returned candidate with the candidate standing immediately before final review. The reserved final review time is not reallocated to search, so this measures the effect of final selection rather than a rerun with a longer exploration budget.

\paragraph{Historical any-medal retention.}
For each task run, historical any-medal status is one if any accepted exploration candidate earns bronze or better, and zero otherwise. We compute this status from offline evaluator records before final review; repeated receipts for a reviewed candidate add no new historical outcome. Historical any-medal retention is the fraction of historically medal-positive task runs whose selected candidate earns a medal. Table~\ref{tab:hma-review-ablation} compares this fraction before and after final review. Of 450 task runs, 330 produce a medal-winning candidate; final review raises retained medals from 311/330 (94.2\%) to 319/330 (96.7\%), recovering eight medals without losing any held immediately before final review.

\begin{table}[!htbp]
\centering
\caption{\textbf{Final review improves retention of historical any-medal outcomes.} Each configuration contains 75 task runs. Historical counts runs with at least one medal-winning exploration candidate; retention divides selected medal counts by Historical. Before and after refer to the standing candidate and final review's returned candidate. Missed counts historical medal outcomes not retained after final review.}
\label{tab:hma-review-ablation}
\footnotesize
\setlength{\tabcolsep}{4pt}
\begin{tabular}{l r r c c r}
\toprule
Configuration & Changed & \makecell{Historical\\any medal} & \makecell{Selected medals\\before $\to$ after} & \makecell{Retention (\%)\\before $\to$ after} & \makecell{Missed\\after} \\
\midrule
Opus $\to$ GPT & 4 & 59 & $58 \to 58$ & $98.3 \to 98.3$ & 1 \\
GPT $\to$ Opus & 6 & 57 & $55 \to 55$ & $96.5 \to 96.5$ & 2 \\
GPT $\to$ DeepSeek-V4.1 & 11 & 56 & $51 \to 54$ & $91.1 \to 96.4$ & 2 \\
GPT $\to$ Kimi & 6 & 54 & $52 \to 53$ & $96.3 \to 98.1$ & 1 \\
GPT $\to$ GLM & 9 & 54 & $48 \to 52$ & $88.9 \to 96.3$ & 2 \\
DeepSeek-V4.1 $\to$ Kimi & 4 & 50 & $47 \to 47$ & $94.0 \to 94.0$ & 3 \\
\midrule
\textbf{All configurations} & 40 & 330 & $311 \to 319$ & $94.2 \to 96.7$ & 11 \\
\bottomrule
\end{tabular}
\end{table}

Across all 450 runs, the returned candidate matches the historical any-medal status in 439 (97.6\%), including 120 runs that never produce a medal. The conditional retention rate of 96.7\% excludes these 120 runs. Final review changes 40 selections (8.9\%); 311/319 final medals (97.5\%) were already held before final review. Thirty-five GLM-pair final reviews are silent and one DeepSeek-pair final review fails; their recorded fallback candidates remain included. This audit measures historical any-medal retention at final selection.

\paragraph{Medal recovery and remaining misses.}
On MBM-01 with GPT--DeepSeek-V4.1, the first GPT option reaches 0.84671 without a medal, and DeepSeek reaches bronze at 0.85445. The returning GPT replaces the standing result with 0.75795; final review restores the archived DeepSeek candidate and recovers the medal. On MBM-20 with GPT--GLM, GPT produces a gold-medal candidate at 0.84835, but the final GLM candidate scores 0.25403; final review restores 0.84835. In aggregate, final review reduces missed historical medals from 19 to 11, supporting its role in retaining search outcomes at final selection.

\clearpage
\section{Case Studies of Heterogeneous Alternation}
\label{app:case-studies}

We examine four model pairs across five model families: GPT--Opus, GPT--DeepSeek, GLM--Kimi, and DeepSeek--Kimi. The INGV case in Section~\ref{sec:hma-case-study} is followed by examples from text prediction, image regression, and satellite positioning.

\paragraph{Selection and measurement.}
We selected task runs with inspectable artifacts, controller logs, evaluator receipts, and native response records. GPT--Opus, GPT--DeepSeek, and DeepSeek--Kimi occur in the main comparison; GLM--Kimi is an additional combination using the same $k=5$ workflow. Following the trajectory analysis of \citet{zhao2026scienceflow}, we align each accepted candidate with its producing option using controller counts and receipt times. Test scores are evaluated offline and were hidden from the agents; all accepted scores, including regressions, are plotted. Goal references use three runs per constituent. The appendix figures compare disjoint first- and last-20-response windows; options with fewer than 40 responses show their counts instead. Final review is excluded from token statistics and preserves the standing candidate in all four cases. Response renewal is compared within the same model, since tokenizers and policies differ. These cases illustrate observed mechanisms rather than estimating their average causal effect.

\subsection{INGV Volcanic Eruption Prediction}
\label{app:case-ingv}

\paragraph{Repeated refinement.}
On INGV (MBH-07), GPT and Opus complete seven exploration options with 5, 5, 5, 2, 5, 1, and 4 accepted experiments and 77, 263, 251, 59, 110, 49, and 48 responses, respectively. Options 1, 2, 3, and 5 reach the cap; options 4 and 6 end naturally, and option 7 reaches the deadline. Their best MAEs are 1,123,915, 41,214, 39,061, 39,878, 37,422, 36,934, and 34,234: five of the six post-handoff options improve the historical best, while the fourth option regresses. Shared features and cached predictions support successive refinements. The trace records GPT reusing and pruning Opus's retrieval features, Opus adding consistency constraints for paired windows, GPT introducing an uncertainty-based residual gate, Opus refining its calibration, and GPT enforcing joint temporal feasibility.

\paragraph{Score comparisons and plot conventions.}
After the first Opus session reaches 41,214 MAE, subsequent alternation lowers it to 34,234, a further 16.9\% improvement and 46.5\% below the best of six constituent goal runs (63,991). Native MAE resolves progress beyond the already-gold goal results. Figure~\ref{fig:hma-case-ingv}(a) retains all 27 accepted scores, including regressions, without smoothing; it uses a logarithmic axis with a linear inset for late options. Horizontal lines mark the gold threshold (3,971,366), the minimum MAE in MLE-bench's 621-entry reference leaderboard (3,438,191), and the best constituent goal endpoint. Blue and purple regions identify GPT and Opus, and dotted vertical lines mark handoffs.

\paragraph{Response measurement and renewal.}
In Figure~\ref{fig:hma-case-ingv}(b), each point averages output tokens per response from the preceding accepted submission or option start to the current acceptance. The 27 intervals contain 826 responses, excluding 31 trailing responses; final review is excluded. Comparing the first experiment after each return with the same model's preceding last experiment, mean response length increases on four of five returns by a factor of 1.6--3.6; the final Opus return has a ratio of 0.94.

\subsection{Google QUEST Q\&A Labeling: GPT--DeepSeek}
\label{app:case-quest}

\paragraph{Repeated reuse of text-model predictions.}
On Google QUEST Q\&A Labeling (MBM-10), GPT-5.6-sol and DeepSeek-V4.1-Flash complete seven options, five of which produce 23 accepted candidates. GPT initializes sparse-text and encoder models; DeepSeek completes additional DeBERTa fits and combines their saved predictions. Subsequent sessions reuse these artifacts for target-specific blending, calibration, and larger-encoder refinement. The first five options achieve best Spearman correlations of 0.45664, 0.47124, 0.48741, 0.50381, and 0.50760. The returned candidate scores 0.50747, versus 0.47252 for the best of six constituent goal runs (Figure~\ref{fig:hma-case-quest}). At the first returns, GPT's mean response length rises from 168.9 to 529.1 tokens and DeepSeek's from 647.2 to 1060.9; GPT's next return instead falls from 616.7 to 339.2. The final two options produce no accepted experiment, so neither response renewal nor useful progress occurs at every handoff.

\begin{figure}[!htbp]
\centering
\includegraphics[width=\linewidth]{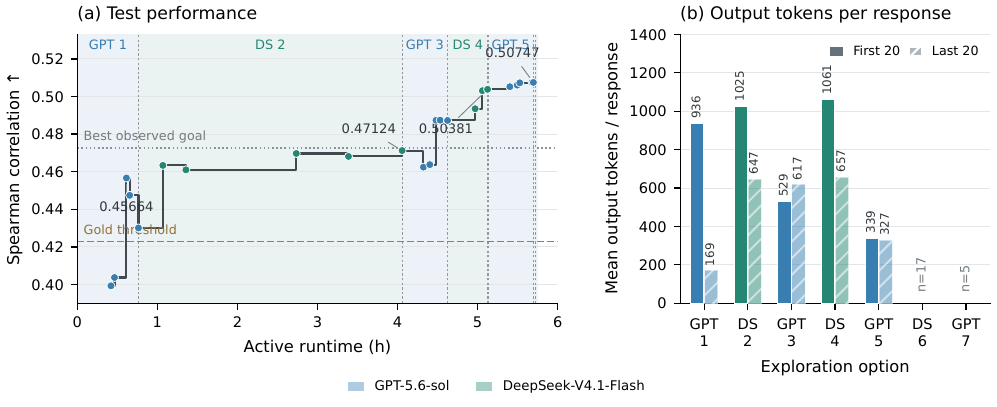}
\caption{\textbf{GPT--DeepSeek refinement on Google QUEST.} (a) All 23 accepted scores, including regressions. Shading identifies the active model; reference lines mark the best constituent goal endpoint and gold threshold. (b) First- and last-20-response means per option. Options 6 and 7 have only 17 and 5 responses and no accepted experiments; their response counts replace overlapping windows. Final review retains the standing candidate and is excluded from token statistics.}
\label{fig:hma-case-quest}
\end{figure}

\subsection{PetFinder Pawpularity: GLM--Kimi}
\label{app:case-pawpularity}

\paragraph{Extending a shared prediction pool.}
On PetFinder Pawpularity (MBM-27), GLM-5.3 builds CLIP and DINOv2 feature caches, regressors, and calibrated predictions before Kimi K3 takes over at 2.40 hours. Kimi reuses the out-of-fold predictions in a constrained stack and adds concatenated-feature ridge and MLP members, improving the best test RMSE from 16.87437 to 16.78333. Returning GLM explores further combinations and reaches 16.77953; the returned candidate scores 16.85639 and earns gold (Figure~\ref{fig:hma-case-pawpularity}). Neither constituent earns a medal in its three goal runs. The three productive options contain 5, 5, and 3 experiments; the final Kimi option adds none. GLM's returning first 20 responses average 724.5 tokens, compared with 503.8 in its preceding option's last 20. The final score regression remains visible: artifact reuse improves the search trajectory without ensuring that the returned candidate is its best.

\begin{figure}[!htbp]
\centering
\includegraphics[width=\linewidth]{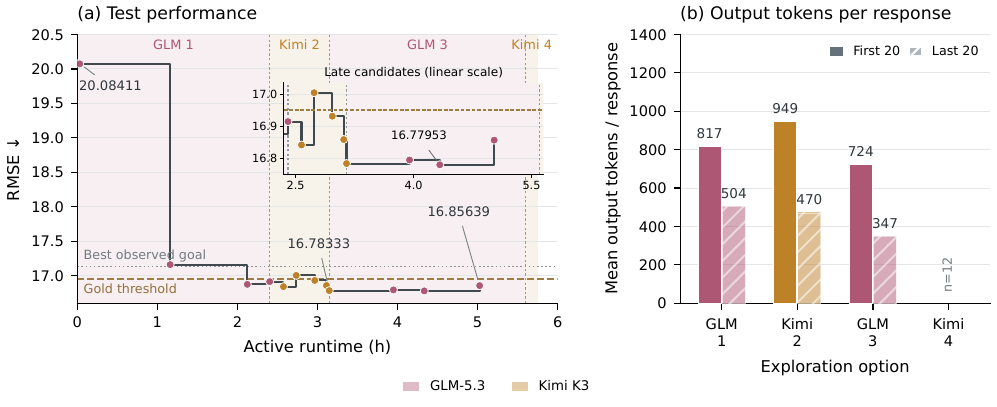}
\caption{\textbf{GLM--Kimi cooperation on Pawpularity.} (a) Every accepted test RMSE, with a linear inset resolving the late candidates. The dotted and dashed lines show the best of six constituent goal endpoints and the gold threshold. (b) Response-length windows for the three productive options; the last option has only 12 responses. The final candidate remains below the gold threshold despite regressing from the best observed RMSE.}
\label{fig:hma-case-pawpularity}
\end{figure}

\subsection{Google Smartphone Decimeter: DeepSeek--Kimi}
\label{app:case-gnss-dskimi}

\paragraph{Late correction after an unsuccessful refinement.}
On the Google Smartphone Decimeter Challenge 2022 (MBH-12), DeepSeek-V4.1-Flash builds a GNSS fusion pipeline with Doppler constraints and robust optimization, reaching 4.33091 mean haversine error. Kimi K3 improves it to 4.28914, but DeepSeek's next option leaves a worse candidate at 5.35529. Returning at 5.08 hours, Kimi retains the shared solver, checks the mean-error objective, and replaces straight-line extrapolation through an end-of-log outage with a direction-matched route from public training trajectories. Its accepted candidate reaches 2.77310 and earns bronze, a 42.1\% reduction from the best of six constituent goal endpoints (4.78536), none of which earns a medal (Figure~\ref{fig:hma-case-gnss-dskimi}). Kimi's first 20 returning responses average 847.9 tokens, below the preceding Kimi option's last-20 mean of 1010.0. This late quality gain therefore does not require a response-length rebound.

\begin{figure}[!htbp]
\centering
\includegraphics[width=\linewidth]{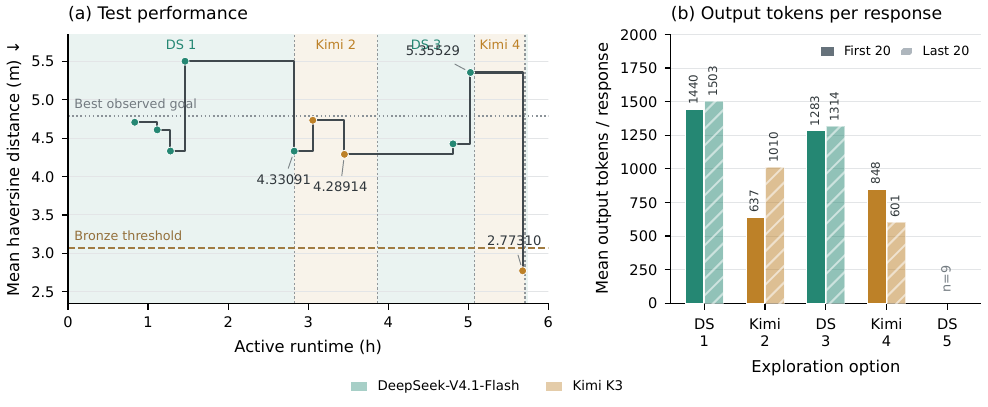}
\caption{\textbf{DeepSeek--Kimi correction in smartphone positioning.} (a) All ten accepted mean-haversine scores. The fourth option repairs the late outage and crosses the bronze threshold after the third option regresses. The dotted line is the best constituent goal endpoint. (b) First- and last-20-response means; the final DeepSeek option contains nine responses and no accepted experiment. Final review retains Kimi's repaired candidate.}
\label{fig:hma-case-gnss-dskimi}
\end{figure}

\end{document}